\newcommand{\eps}{\varepsilon}
\newcommand{\dir}{\ensuremath{\mathsf{d}}}
\DeclareMathOperator{\poly}{\textsf{poly}}
\newcommand{\Var}{\ensuremath{\textsf{Var}}}
\newcommand{\norm}[1]{\lVert #1 \rVert}
\newcommand\numberthis{\addtocounter{equation}{1}\tag{\theequation}}
\newcommand{\abs}[1]{| #1 |}
\newcommand{\setdef}[2]{\left\{ #1 \mid #2 \right\}}
\newcommand{\E}{\ensuremath{\textsf{E}}}
\newtheorem{theorem}{Theorem}
\newtheorem{lemma}[theorem]{Lemma}
\title{Learning Mixtures of Gaussians with Censored Data}
\author{Wai Ming Tai \\ University of Chicago \and Bryon Aragam \\ University of Chicago}
\date{}
\begin{document}

\maketitle
\begin{abstract}
We study the problem of learning mixtures of Gaussians with censored data. Statistical learning with censored data is a classical problem, with numerous practical applications, however, finite-sample guarantees for even simple latent variable models such as Gaussian mixtures are missing.
Formally, we are given censored data from a mixture of univariate Gaussians 
$$
    \sum_{i=1}^k w_i \mathcal{N}(\mu_i,\sigma^2),
$$
i.e. the sample is observed only if it lies inside a set $S$.
The goal is to learn the weights $w_i$ and the means $\mu_i$.
We propose an algorithm that takes only $\frac{1}{\varepsilon^{O(k)}}$ samples to estimate the weights $w_i$ and the means $\mu_i$ within $\varepsilon$ error.
\end{abstract}

\section{Introduction}

When we collect data, we often encounter situations in which the data are partially observed. 
This can arise for a variety of reasons, such as measurements falling outside of the range of some apparatus or device. 
In machine learning and statistics, this phenomenon is known as truncated or censored data. 
Both refer to the case where we do not observe the data when they fall outside a certain domain. 
For censored data, we know the existence of data that fall outside the domain, while for truncated data, we do not.

It is common to encounter truncated or censored data in our daily lives. 
An example of truncated data is census data. When a census bureau collects data, there may be some difficulties for the bureau in collecting data for certain demographics for security, privacy, or legal reasons,
and these individuals may have no incentive to report their data.
Therefore, the bureau cannot collect data about these populations.
In this case, the census data are truncated.

On the other hand, an example of censored data is test scores. 
The range of scores in a test is typically set to be from 0 to 100. Some students may score the maximum score of 100 points, in which case it is unknown if they could have scored even higher if the upper bound of the test score was higher than 100. 
Of course, even though the students' scores are capped at 100, their scores are still being reported. 
Hence, their scores are censored, which distinguishes them from truncated data.

Indeed, statistical estimation on truncated or censored data is a classical problem, dating back to the eighteenth century \cite{bernoulli1760essai}.
After Bernoulli, \cite{galton1898examination,pearson1902systematic,pearson1908generalised,lee1914table,fisher1931properties} studied how to estimate the mean and the variance of of a univariate Gaussian distribution from truncated samples.
However, most existing results do not address the problem of finite-sample bounds, i.e. the results are mostly experimental or asymptotic \cite{lee2012algorithms,mclachlan1988fitting}.
In fact, one can learn the distribution with infinitely many truncated or censored samples---under mild assumptions, one can show that the function restricted on a certain region can be extended to the entire space by the identity theorem from complex analysis.
Unfortunately, it is still not clear how to translate such results to finite sample bounds.

A recent notable result by \citet{daskalakis2018efficient} gave the first efficient algorithm to learn the mean and the covariance of a single Gaussian with finitely many truncated samples.
A natural extension to the problem of learning a single Gaussian is the problem of learning a mixture of Gaussians.
To the best of our knowledge, there is no provable guarantees on the problem of learning a mixture of Gassians with a finite number of truncated or censored samples even in one dimension.

As we will discuss in the related work section, there is a long line of work on learning a mixture of Gaussians.
Likelihood-based approaches often do not provide provable guarantees for learning mixtures of Gaussians since the objective function is not convex unless we impose strong assumptions \cite{xu2016global,daskalakis2017ten}.
On the other hand, many recent results rely heavily on the method of moments, i.e. the algorithm estimates the moments $\E(X^s)$ as an intermediate step.
With truncated or censored data, estimating $\E(X^s)$ (here, the expectation is over the original, untruncated data) becomes very challenging.

To overcome this, we propose an approach for estimating moments from censored data.
Recall that ordinary moments are just expectations of monomials of a random variable.
However, by generalizing this to more general functions of a random variable, we open up the possibility to capture more complex structures of the distribution. 
In particular, when the data is censored, these generalized functions allow us to relate the expectations back the raw, uncensored distribution.
One must keep in mind that we still need to make a choice of what functions to consider in addition to providing efficient estimators of these generalized moments. 
We preview that a suitable choice is found by a specific linear combination of Hermite polynomials derived from the solution to a system of linear equations.
In our proof, we will delve deeper into the analysis of the expectations of functions, depending on the domain, and provide a delicate analysis to prove our desired result.

Based on the above discussion, we may want to ask the following question in a general sense: \emph{Can we learn a mixture of Gaussians with truncated or censored data?}
In this paper, we consider this problem and focus on the case that the data is censored and the Gaussians are univariate and homogeneous.
We now define the problem formally.

\section{Problem Definition}

Let $\mathcal{N}(\mu,\sigma^2)$ be the normal distribution with mean $\mu$ and variance $\sigma^2$.
Namely, the pdf of $\mathcal{N}(\mu,\sigma^2)$ is 
\begin{align*}
    g_{\mu,\sigma^2}(x) := \frac{1}{\sqrt{2\pi}\sigma}e^{-\frac{1}{2\sigma^2}(x-\mu)^2}.
\end{align*}
For any subset $S\subset\mathbb{R}$, let $I_{\mu,\sigma^2}(S)$ be the probability mass of $\mathcal{N}(\mu,\sigma^2)$ on $S$, i.e.
\begin{align*}
    I_{\mu,\sigma^2}(S) := \int_{x\in S} g_{\mu,\sigma^2}(x) \dir x.
\end{align*}
Also, let $\mathcal{N}(\mu,\sigma^2,S)$ denote the conditional distribution of a normal $\mathcal{N}(\mu,\sigma^2)$ given the set $S$.
Namely, the pdf of $\mathcal{N}(\mu,\sigma^2,S)$ is 
\begin{align*}
    g_{\mu,\sigma^2,S}(x) := 
    \begin{cases}
        \frac{1}{I_{\mu,\sigma^2}(S)} g_{\mu,\sigma^2}(x) &\text{if $x\in S$} \\
        0 & \text{if $x\notin S$}.
    \end{cases}
\end{align*}

Given a subset $S\subset\mathbb{R}$,
we consider the following sampling procedure.
Each time, a sample is drawn from a mixture of Gaussians 
\begin{align*}
    & \sum_{i=1}^k w_i\mathcal{N}(\mu_i,\sigma^2),
\end{align*}
where $w_i>0$, $\sum_{i=1}^kw_i=1$, $\mu_i\in\mathbb{R}$, and $\sigma>0$.
If this sample is inside $S$, we obtain this sample; otherwise, we fail to generate a sample.
Formally, $X$ is a random variable drawn from the following distribution.
Let $\alpha$ be the probability mass $\sum_{i=1}^k w_i I_{\mu_i,\sigma^2}(S)$.
\begin{align*}
    X \sim
    \begin{cases}
        \sum_{i=1}^k w_i\mathcal{N}(\mu_i,\sigma^2,S) & \text{with probability $\alpha$} \\
        \textsf{FAIL} & \text{with probability $1-\alpha$}.
    \end{cases} \numberthis\label{eq:model}
\end{align*}
The value \textsf{FAIL} here refers to values that are not directly accessible to the algorithm.

We assume that 
\begin{itemize}
    \item[] (A1) $S$ is an interval $[-R,R]$ for some constant $R>0$ and is known (it is easy to extend $S$ to be any measurable subset of $[-R,R]$; for simplicity, we assume $S=[-R,R]$);
    \item[] (A2) All $\mu_i$ are bounded, i.e. $\abs{\mu_i}<M$ for some constant $M>0$;
    \item[] (A3) The variance $\sigma^2$ is known.
\end{itemize}
We also assume that the exact computation of the integral $\int_0^z e^{-\frac{1}{2}t^2}\dir t$ for any $z$ can be done.
Indeed, one can always approximate this integral with an exponential convergence rate via Taylor expansion.
As we can see in our proof, this error is negligible.

For a given error parameter $\eps>0$, we want to estimate all $w_i,\mu_i$ within $\eps$ error.
The question is \emph{how many samples from the above sampling procedure do we need to achieve this goal?}
Our main contribution is a quantitative answer to this question. We will prove the following theorem:

\begin{theorem}\label{thm:main}
Suppose we have $n$ samples drawn from the distribution \eqref{eq:model} and we assume that the mixture satisfies (A1)-(A3).
Furthermore, let $w_{\min}$ be $\min\setdef{w_i}{i=1,\dots,k}$ and $\Delta_{\min}$ be $\min\setdef{\abs{\mu_i-\mu_j}}{i,j=1,\dots,k\text{ and }i\neq j}$.
Then, for a sufficiently small $\eps>0$, if $w_{\min}$ and $\Delta_{\min}$ satisfy $w_{\min}\Delta_{\min} = \Omega(\eps)$, there is an efficient algorithm that takes $n=C_k \cdot \frac{1}{\eps^{O(k)}}$ (where $C_k$ is a constant depending on $k$ only) samples\footnote{Here we assume the parameters $R$ and $M$ to be constant for simplicity.
It is easy to keep track of them in our proof and show that the sample bound is $C_k\cdot (\frac{1}{\eps})^{O(k\cdot \log(M+R+\frac{1}{R}))}$.} as the input and outputs $\widehat{w}_i,\widehat{\mu}_i$ for $i=1,\dots,k$ such that, up to an index permutation $\Pi$,
\begin{align*}
    \abs{\widehat{w}_{\Pi(i)}-w_i}<\eps,\qquad \abs{\widehat{\mu}_{\Pi(i)}-\mu_i}<\eps\qquad \text{for $i=1,\dots,k$}
\end{align*}
with probability $\frac{99}{100}$.
The running time of the algorithm is $O(n\cdot\poly(k,\frac{1}{\eps}))$.
\end{theorem}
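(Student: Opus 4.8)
The plan is to reduce parameter estimation to a classical moment problem and then resolve the central difficulty: recovering \emph{uncensored} moments from censored samples. Write $p_s=\sum_{i=1}^k w_i\mu_i^s$ for the weighted power sums of the means. Since $\sigma$ is known and $\sum_i w_i=1$, the values $p_0=1,p_1,\dots,p_{2k-1}$ determine $\setdef{(w_i,\mu_i)}{i=1,\dots,k}$ through the standard Prony inversion: the $\mu_i$ are the roots of a polynomial read off from the Hankel matrix of $p_0,\dots,p_{2k-1}$, and the $w_i$ are then recovered from a Vandermonde system. First I would prove a quantitative stability version of this inversion, showing that if $\abs{\hat p_s-p_s}\le\eta$ for all $s\le 2k-1$ then one recovers $\hat w_i,\hat\mu_i$ with error at most $\eps$, provided $\eta\le\poly(w_{\min},\Delta_{\min},1/M)^{O(k)}\cdot\eps$. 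Under the hypothesis $w_{\min}\Delta_{\min}=\Omega(\eps)$ this forces the target accuracy $\eta=\eps^{O(k)}$, which is the source of the final sample bound.

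The heart of the argument is to estimate the uncensored power sums $p_s$ from censored data. For any bounded $f$, the observed samples give an unbiased estimate of $\E(f(X)\mathbbm{1}_S(X))=\sum_i w_i\int_S f(x)g_{\mu_i,\sigma^2}(x)\,\dir x$, where the censoring probability $\alpha$ is absorbed automatically by averaging over all draws and treating \textsf{FAIL} as contributing $0$. It therefore suffices to build, for each $s$, a function $f_s$ on $S=[-R,R]$ with $\int_S f_s(x)g_{\mu,\sigma^2}(x)\,\dir x\approx\mu^s$ uniformly over $\abs{\mu}\le M$, since then $\sum_i w_i\int_S f_s g_{\mu_i,\sigma^2}\approx p_s$. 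I would take $f_s=\sum_{j=0}^D c_j h_j$, a linear combination of the variance-scaled Hermite polynomials $h_j(x)=\sigma^{-j}He_j(x/\sigma)$. The guiding identity is $\partial_\mu g_{\mu,\sigma^2}=-\partial_x g_{\mu,\sigma^2}$, so that differentiating $G_f(\mu):=\int_S f\,g_{\mu,\sigma^2}$ under the integral sign and applying the Rodrigues formula gives $G_f^{(m)}(0)=\int_S f(x)h_m(x)g_{0,\sigma^2}(x)\,\dir x$. Imposing $G_{f_s}^{(m)}(0)=s!\,\delta_{ms}$ for $m=0,\dots,D$ is then a linear system $Ac=s!\,e_s$ whose matrix $A_{mj}=\int_S h_m h_j g_{0,\sigma^2}$ is, over the whole line, diagonal by Hermite orthogonality, and over $S$ is that diagonal plus a tail perturbation supported on $\abs{x}>R$. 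Solving this system fixes the $c_j$, and matching these $D$ derivatives, together with a bound on the analytic tail $\sum_{m>D}\frac{G_{f_s}^{(m)}(0)}{m!}\mu^m$, controls the uniform error of $G_{f_s}(\mu)-\mu^s$ on $[-M,M]$.

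Given the $f_s$, I would set $\hat p_s=\frac1n\sum_\ell f_s(X_\ell)$, the average over the non-\textsf{FAIL} draws among the $n$ samples, which is unbiased for $\sum_i w_i\int_S f_s g_{\mu_i,\sigma^2}$. If $\norminf{f_s}\le B$ on $S$, then a Chebyshev bound gives $\abs{\hat p_s-p_s}\le\eta$ with high constant probability once $n\gtrsim B^2/\eta^2$, and a union bound over the $2k$ moments completes the statistical step. Feeding $\hat p_0,\dots,\hat p_{2k-1}$ into the stable inversion of the first step yields the $\hat w_i,\hat\mu_i$ within $\eps$. Tracking $\eta=\eps^{O(k)}$ together with a polynomial bound on $B$ gives $n=C_k\cdot\eps^{-O(k)}$, and since the per-sample work is evaluating the $f_s$ and the inversion is $\poly(k)$-size linear algebra, the running time is $O(n\cdot\poly(k,1/\eps))$.

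The main obstacle I anticipate is controlling, simultaneously in the construction of $f_s$, the uniform approximation error $\sup_{\abs{\mu}\le M}\abs{G_{f_s}(\mu)-\mu^s}$ and the magnitude $B=\norminf{f_s}$ that governs the estimator's variance. These pull against each other: raising the Hermite degree $D$ drives the approximation error down, but because the Hermite polynomials and their tail integrals over $\abs{x}>R$ grow rapidly with the degree, it inflates the coefficients $c_j$ and hence $B$. Showing that the truncated Gram system $A$ stays sufficiently well-conditioned, and choosing $D$ to balance these effects so that the approximation error reaches $\eps^{O(k)}$ while $B$ remains polynomial, is the delicate analysis the introduction alludes to and the step I expect to require the most care.
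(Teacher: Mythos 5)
Your proposal follows essentially the same route as the paper: reduce to the mixing distribution's moments and invert via the Hankel/Prony method (the paper invokes Wu--Yang's denoised method of moments), estimate those moments from censored data by averaging a linear combination of Hermite polynomials over all draws with \textsf{FAIL} treated as zero, and determine the coefficients from the same truncated Gram system --- your conditions $G_{f_s}^{(m)}(0)=s!\,\delta_{ms}$ are identical to the paper's system $V\beta_i=\mathbf{e}_i$, since $G_f^{(m)}(0)=m!\,J_{f,m}$. The conditioning/coefficient-growth tension you single out as the main obstacle is exactly what the paper's appendix resolves, using Cramer's rule together with Cauchy--Binet and Schur-polynomial determinant estimates to show $\abs{J_{f_i,j}}\le 2^{-\Omega(j\log j)}$ for the bias tail and $\abs{\beta_{i,a}}\le 2^{O(\ell\log\ell)}$ for the variance, with $\ell=\Theta\bigl(\log\tfrac{1}{\delta}/\log\log\tfrac{1}{\delta}\bigr)$ balancing the two.
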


In other words, this theorem states that the sample complexity for learning mixtures of $k$ univariate Gaussians with censored data is $\frac{1}{\eps^{O(k)}}$ which is optimal in terms of asymptotic growth of the exponent $O(k)$ \cite{wu2018optimal}.
As for the optimality of the constant in the exponent $O(k)$, this is an interesting open problem.

\section{Related Work}

Without truncation or censoring, the study of learning Gaussian mixture models \cite{pearson1894contributions} has a long history. 
We focus on recent algorithmic results; see \citet{lindsay1995} for additional background.
\citet{dasgupta1999learning} proposed an algorithm to learn the centers of each Gaussian when the centers are $\Omega(\sqrt{d})$ apart from each other.
There are other results such as \cite{vempala2004spectral,regev2017learning} that are based on similar separation assumptions and that use clustering techniques.

There are other results using the method of moments.
Namely, the algorithm estimates the moments $\E(X^s)$ as an intermediate step.
\citet{moitra2010settling,kalai2010efficiently} showed that, assuming $k = O(1)$, there is an efficient algorithm that learns the parameters with $\frac{1}{\eps^{O(k)}}$ samples.
\citet{hardt2015tight} showed that, when $k = 2$, the optimal sample complexity of learning the parameters is $\Theta(\frac{1}{\eps^{12}})$.
For the case that the Gaussians in the mixture have equal variance, \citet{wu2018optimal} proved the optimal sample complexity for learning the centers is $\Theta(\frac{1}{\eps^{4k-2}})$ if the variance is known and $\Theta(\frac{1}{\eps^{4k}})$ if the variance is unknown.
Later, \citet{doss2020optimal} extended the optimal sample complexity to high dimensions.

When the data are truncated or censored, however, the task becomes more challenging.
\cite{schneider1986truncated,balakrishnan2014art,cohen2016truncated} provided a detailed survey on the topic of learning Gaussians with truncated or censored data.
Recently, \citet{daskalakis2018efficient} showed that, if the samples are from a single Gaussian in high dimensional spaces, 
there is an algorithm that uses $\widetilde{O}(\frac{d^2}{\eps^2})$ samples to learn the mean vector and the covariance matrix.
Their approach is likelihood based.
Namely, they optimize the negative log-likelihood function to find the optimal value.
This approach relies on the fact that, for a single Gaussian, the negative log-likelihood function is convex and hence one can use greedy approaches such as stochastic gradient descent to find the optimal value.

Unfortunately, when there are multiple Gaussians in the mixture, we may not have such convexity property for the negative log-likelihood function.
\citet{nagarajan2020analysis} showed that, for the special case of a truncated mixture of two Gaussians whose centers are symmetric around the origin and assuming the truncated density is known, the output by the EM algorithm converges to the true mean as the number of iterations tends to infinity.

There are other problem settings that are closely related to ours such as robust estimation of the parameters of a Gaussian in high dimensional spaces.
The setting of robust estimation is the following.
The samples we observed are generated from a single high dimensional Gaussians except that a fraction of them is corrupted.
Multiple previous results such as  \cite{hopkins2022efficient,liu2021robust,diakonikolas2019recent,diakonikolas2019robust,lai2016agnostic,diakonikolas2017being,diakonikolas2018robustly} proposed learning algorithms to learn the mean vector and the covariance matrix.

Regression with truncated or censored data is another common formulation.
Namely, we only observe the data when the value of the dependent variable lies in a certain subset.
A classic formulation is the truncated linear regression model \cite{tobin1958estimation,amemiya1973regression,hausman1977social,maddala1986limited}.
Recently, in the truncated linear regression model, \citet{daskalakis2019computationally} proposed a likelihood-based estimator to learn the parameters.

\section{Preliminaries}

We denote the set $\{0,1,\dots,n-1\}$ to be $[n]$ for any positive integer $n$.
Let $h_j(x)$ be the (probabilist's) Hermite polynomials, i.e.
\begin{align*}
	h_j(x) & = (-1)^je^{\frac{1}{2}x^2}\frac{\dir^j}{\dir \xi^j}e^{-\frac{1}{2}\xi^2}\bigg|_{\xi=x} \qquad\text{for all $x\in\mathbb{R}$.}
\end{align*}
Hermite polynomials can also be given by the exponential generating function, i.e.
\begin{align*}
    e^{x\mu-\frac{1}{2}\mu^2} = \sum_{j=0}^\infty h_j(x)\frac{\mu^j}{j!}\qquad \text{for any $x,\mu\in\mathbb{R}$.} \numberthis\label{eq:exp_hermite}
\end{align*}
Also, the explicit formula for $h_j$ is 
\begin{align*}
    h_j(x) = j!\sum_{i=0}^{\lfloor j/2\rfloor} \frac{(-1/2)^i}{i!(j-2i)!} x^{j-2i}
\end{align*}
and this explicit formula is useful in our analysis.

In our proof, we will solve multiple systems of linear equations. 
Cramer's rule provides an explicit formula for the solution of a system of linear equations whenever the system has a unique solution.
\begin{lemma}[Cramer's rule]\label{lem:cramer}
	
	Consider the following system of $n$ linear equations with $n$ variables.
	\begin{align*}
		Ax=b
	\end{align*}
	where $A$ is a $n$-by-$n$ matrix with nonzero determinant and $b$ is a $n$ dimensional vector.
	Then, the solution of this system $\widehat x = A^{-1}b$ satisfies that the $i$-th entry of $\widehat x$ is 
	\begin{align*}
		\det (A^{(i\leftarrow b)}) / \det (A)
	\end{align*}
	where $A^{(i\leftarrow b)}$ is the same matrix as $A$ except that the $i$-th column is replaced with $b$.
	
\end{lemma}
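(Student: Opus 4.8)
The plan is to prove Cramer's rule directly from the multilinearity and alternating properties of the determinant, avoiding any appeal to the adjugate matrix. Since $\det(A)\neq 0$, the matrix $A$ is invertible, so the system $Ax=b$ has the unique solution $\widehat x=A^{-1}b$. It therefore suffices to establish the single scalar identity
\begin{align*}
    \det(A^{(i\leftarrow b)})=\widehat x_i\cdot\det(A)\qquad\text{for each }i=1,\dots,n,
\end{align*}
since dividing by $\det(A)\neq 0$ then yields the claimed formula.

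First I would write $A$ in terms of its columns, say $A=[a_1,\dots,a_n]$, so that the relation $A\widehat x=b$ reads $b=\sum_{j=1}^n\widehat x_j\,a_j$. The key move is to substitute this expression for $b$ into the $i$-th column of $A^{(i\leftarrow b)}$, so that the $i$-th column becomes a linear combination of $a_1,\dots,a_n$ while every other column is left unchanged.

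Next I would invoke linearity of the determinant in its $i$-th column (with all other columns held fixed) to expand
\begin{align*}
    \det(A^{(i\leftarrow b)})
    &=\det\bigl([\,a_1,\dots,a_{i-1},\textstyle\sum_{j}\widehat x_j a_j,\,a_{i+1},\dots,a_n\,]\bigr) \\
    &=\sum_{j=1}^n\widehat x_j\,\det\bigl([\,a_1,\dots,a_{i-1},a_j,a_{i+1},\dots,a_n\,]\bigr).
\end{align*}
For every index $j\neq i$, the matrix inside the determinant has the column $a_j$ appearing both in position $j$ and in position $i$; because the determinant is an alternating form, any matrix with two equal columns has determinant zero, so all such terms vanish. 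Only the $j=i$ term survives, and in that term the matrix is exactly $A$, giving $\det(A^{(i\leftarrow b)})=\widehat x_i\cdot\det(A)$, which is the desired identity.

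There is no genuine obstacle here; the only point demanding care is justifying the two structural facts being used—multilinearity in each column and the vanishing of any determinant with a repeated column—both of which follow directly from the Leibniz (permutation sum) formula for the determinant, so I would either cite them or derive them in a line. An alternative route would write $A^{-1}=\tfrac{1}{\det(A)}\operatorname{adj}(A)$ and recognize $\sum_j(\operatorname{adj}(A))_{ij}\,b_j$ as the cofactor (Laplace) expansion of $\det(A^{(i\leftarrow b)})$ along its $i$-th column; this is equally valid but presupposes the cofactor formula for the inverse, whereas the multilinearity argument above is shorter and self-contained.
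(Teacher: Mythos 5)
Your proof is correct: the substitution $b=\sum_j \widehat x_j a_j$ into the $i$-th column, followed by column-linearity and the vanishing of determinants with a repeated column, is a complete and standard derivation of Cramer's rule. The paper itself states this lemma as a classical fact without proof, so there is no proof to compare against; your multilinearity argument is the canonical self-contained one, and your closing remark correctly identifies the adjugate/cofactor route as the main alternative and its extra prerequisite.
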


Thanks to the application of Cramer's rule, we often encounter determinants.
The Cauchy-Binet formula is a formula for the determinant of a matrix that each entry can be expressed as an inner product of two vectors that correspond to its row and column.
Note that the Cauchy-Binet formula usually applies to the case that the entries are finite sums.
For our purpose, we state the Cauchy-Binet formula for the case that the entries are in integral form.
\begin{lemma}[Cauchy–Binet formula]\label{lem:cb_formula}
	Let $A$ be a $n$-by-$n$ matrix whose $(r,c)$-entry has a form of $\int_{x\in S} f_r(x)g_c(x)\dir x$ for some functions $f_r,g_c$ and some domain $S\subset\mathbb{R}$.
	Then, the determinant of $A$ is 
	\begin{align*}
	    \det(A) = \int_{x_0>\cdots>x_{n-1}, \mathbf{x}\in S^n} \det(B(\mathbf{x}))\cdot \det(C(\mathbf{x})) \dir \mathbf{x} 
	\end{align*}
	where, for any $\mathbf{x}=(x_0,\dots,x_{n-1})\in S^n$, $B(\mathbf{x})$ is a $n$-by-$n$ matrix whose $(r,i)$-entry is $f_r(x_i)$ and $C(\mathbf{x})$ is a $n$-by-$n$ matrix whose $(i,c)$-entry is $g_c(x_i)$.
\end{lemma}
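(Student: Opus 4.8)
The plan is to reduce the integral form of the Cauchy–Binet formula to the elementary Leibniz expansion of a determinant, followed by a symmetrization over the integration variables. First I would write $\det(A)$ via the Leibniz formula as $\sum_{\pi}\operatorname{sgn}(\pi)\prod_{r}A_{r,\pi(r)}$, where $A_{r,c}=\int_S f_r(x)g_c(x)\,\dir x$. Assuming the $f_r,g_c$ are integrable enough on $S$ for Fubini's theorem to apply, I would convert the product of $n$ one-dimensional integrals into a single integral over $S^n$ by introducing an independent variable $x_r$ for each factor, and then pull the factor $\prod_r f_r(x_r)$ outside the permutation sum since it does not depend on $\pi$. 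This gives
\[
\det(A)=\int_{S^n}\Big(\prod_{r}f_r(x_r)\Big)\Big(\sum_{\pi}\operatorname{sgn}(\pi)\prod_r g_{\pi(r)}(x_r)\Big)\,\dir\mathbf{x},
\]
where the inner sum is recognized, again by the Leibniz formula, as $\det\big[g_c(x_r)\big]_{r,c}=\det(C(\mathbf{x}))$.

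The key step is symmetrization to turn the remaining diagonal product $\prod_r f_r(x_r)$ into the second determinant $\det(B(\mathbf{x}))$. Because $S^n$ is permutation-invariant, the change of variables $x_r\mapsto x_{\rho(r)}$ for any $\rho$ leaves the integral unchanged; under this relabeling $\det(C(\mathbf{x}))$ becomes $\det\big[g_c(x_{\rho(r)})\big]_{r,c}=\operatorname{sgn}(\rho)\det(C(\mathbf{x}))$ by the row-permutation rule for determinants. Averaging the resulting $n!$ identities over all $\rho$ and using $\sum_\rho\operatorname{sgn}(\rho)\prod_r f_r(x_{\rho(r)})=\det\big[f_r(x_i)\big]_{r,i}=\det(B(\mathbf{x}))$ (the Leibniz formula read backwards) yields
\[
\det(A)=\frac{1}{n!}\int_{S^n}\det(B(\mathbf{x}))\,\det(C(\mathbf{x}))\,\dir\mathbf{x}.
\]

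Finally I would observe that the integrand $\det(B(\mathbf{x}))\det(C(\mathbf{x}))$ is invariant under permuting the coordinates of $\mathbf{x}$, since each determinant picks up the same sign and the two cancel. The $n!$ coordinate orderings partition $S^n$ up to the measure-zero diagonal, so the integral over $S^n$ equals $n!$ times the integral over the ordered simplex $\{x_0>\cdots>x_{n-1}\}\cap S^n$; this cancels the factor $\frac{1}{n!}$ and produces exactly the claimed identity. The part I expect to require the most care is the symmetrization: one must apply the permutation to the integration variables rather than to the row or column indices, and track that the sign from permuting the rows of $C(\mathbf{x})$ is precisely what reconstructs $\det(B(\mathbf{x}))$ from the diagonal product. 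The only analytic hypothesis needed is enough integrability of $f_r,g_c$ on $S$ to justify Fubini, which holds in all our applications since the relevant functions are products of polynomials with Gaussian densities.
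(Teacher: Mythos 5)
Your proof is correct. Note that the paper itself states this lemma without proof, treating it as a known tool (it is the classical Andr\'eief identity, the continuous analogue of Cauchy--Binet), so there is no in-paper argument to compare against; your derivation is the standard one. The two delicate points are both handled properly: the symmetrization is applied to the integration variables, with the sign $\operatorname{sgn}(\rho)$ from permuting the rows of $C(\mathbf{x})$ correctly reassembling $\det(B(\mathbf{x}))$ from the diagonal product via the Leibniz formula read backwards, and the passage from $\frac{1}{n!}\int_{S^n}$ to the ordered region $\{x_0>\cdots>x_{n-1}\}$ is justified because the product $\det(B(\mathbf{x}))\det(C(\mathbf{x}))$ is symmetric (the two signs cancel) and the coincidence set $\{x_i=x_j\}$ has Lebesgue measure zero. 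The Fubini hypothesis you flag is indeed harmless in all of the paper's applications, where $S=[-R,R]$ is bounded and the integrands are polynomials times Gaussian factors.
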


Another tool to help us compute the determinants is Schur polynomials.
Schur polynomials are defined as follows.
For any partition $\lambda=(\lambda_1,\dots,\lambda_n)$ such that $\lambda_1\geq \cdots\geq \lambda_n$ and $\lambda_i\geq 0$, define the function $a_{(\lambda_1+n-1, \lambda_2+n-2,\dots,\lambda_n)}(x_1,x_2,\dots,x_n)$ to be 
\begin{align*}
    \MoveEqLeft a_{(\lambda_1+n-1, \lambda_2+n-2,\dots,\lambda_n)}(x_1,x_2,\dots,x_n)  \\
    & :=
    \det \begin{bmatrix}
    x_1^{\lambda_1+n-1} & x_2^{\lambda_1+n-1} & \cdots & x_n^{\lambda_1+n-1} \\
    x_1^{\lambda_2+n-2} & x_2^{\lambda_2+n-2} & \cdots & x_n^{\lambda_2+n-2} \\
    \vdots & \vdots & \ddots & \vdots \\
    x_1^{\lambda_n} & x_2^{\lambda_n} & \cdots & x_n^{\lambda_n} 
    \end{bmatrix}.
\end{align*}
In particular, when $\lambda=(0,0,\dots,0)$, it becomes the Vandermonde determinant, i.e.
\begin{align*}
    a_{(n-1, n-2,\dots,0)}(x_1,x_2,\dots,x_n) 
    & =
    \prod_{1\leq j<k\leq n}(x_j-x_k).
\end{align*}
Then, Schur polynomials are defined to be 
\begin{align*}
    s_\lambda(x_1,x_2,\dots,x_n)  
    & := \frac{a_{(\lambda_1+n-1, \lambda_2+n-2,\dots,\lambda_n)}(x_1,x_2,\dots,x_n)}{a_{(n-1, n-2,\dots,0)}(x_1,x_2,\dots,x_n) }.
\end{align*}
It is known that $s_\lambda(x_1,x_2,\dots,x_n)$ can be written as $\sum_Y \mathbf{x}^Y$ where the summation is over all semi-standard Young tableaux $Y$ of shape $\lambda$.
Here, each term $\mathbf{x}^Y$ means $x_1^{y_1}\cdots x_{n}^{y_{n}}$ where $y_i$ is the number of occurrences of the number $i$ in $Y$ and note that $\sum_{i=1}^{n}y_i = \sum_{i=1}^n\lambda_i$.
Also, a semi-standard Young tableau $Y$ of shape $\lambda=(\lambda_1,\dots,\lambda_n)$ can be represented by a finite collection of boxes arranged in left-justified rows where the row length is $\lambda_i$ and each box is filled with a number from $1$ to $n$ such that the numbers in each row is non-decreasing and the numbers in each column is increasing.
To avoid overcomplicating our argument, when we count the number of semi-standard Young tableaux of some shape we only use a loose bound for it.

\section{Proof Overview}

Recall that our setting is the following (cf. \eqref{eq:model}):
We are given samples drawn from the following sampling procedure.
Each time, a sample is drawn from a mixture of Gaussians 
\begin{align*}
    & \sum_{i=1}^k w_i\mathcal{N}(\mu_i,\sigma^2)
\end{align*}
where $w_i>0, \sum_{i=1}^kw_i=1, \mu_i\in\mathbb{R}$ and $\sigma>0$.
If this sample is inside $S$, we obtain this sample; otherwise, we fail to generate a sample.
Our goal is to learn $w_i$ and $\mu_i$.

One useful way to view mixtures of Gaussians is to express it as 
\begin{align*}
    \bigg(\sum_{i=1}^k w_i\delta_{\mu_i}\bigg) * \mathcal{N}(0,\sigma^2)
\end{align*}
where $\delta_{\mu_i}$ is the delta distribution at $\mu_i$ and $*$ is the convolution operator.
We call the distribution $\sum_{i=1}^k w_i\delta_{\mu_i}$ the mixing distribution.
Let $\mathbf{m}_j$ be the moment of the mixing distribution, i.e.
\begin{align*}
    \mathbf{m}_j := \sum_{i=1}^k w_i\mu_i^j.
\end{align*}
Since we assume that the variance is known, without loss of generality, we set $\sigma=1$; otherwise, we can scale all samples such that $\sigma=1$.
First, we reduce the problem to estimating $\mathbf{m}_j$, so that we can employ known results on estimating mixtures of Gaussians using the method of moments. For example, \citet{wu2018optimal} proved the following theorem.
\begin{theorem}[Denoised method of moments, \cite{wu2018optimal}]\label{thm:denoised}
Suppose $\mathbf{m}_j$ are the moments of a distribution that has $k$ supports on $\mathbb{R}$, i.e. $\mathbf{m}_j$ has a form of $\sum_{i=1}^kw_i\mu_i^j$ where $w_i>0$, $\sum_{i=1}^kw_i=1$ and $\mu_i\in\mathbb{R}$. 
Let $w_{\min}$ be $\min\setdef{w_i}{i=1,\dots,k}$ and $\Delta_{\min}$ and $\min\setdef{\abs{\mu_i-\mu_j}}{i,j=1,\dots,k\text{ and }i\neq j}$.
For any $\delta>0$, let $\widehat{\mathbf{m}}_j$ be the numbers that satisfy
\begin{align*}
    \abs{\widehat{\mathbf{m}}_j - \mathbf{m}_j}<\delta \qquad \text{for all $j=1,\dots,2k-1$.}
\end{align*}
Then, if $w_{\min}$ and $\Delta_{\min}$ satisfy $w_{\min}\Delta_{\min} = \Omega(\delta^{O(\frac{1}{k})})$, there is an algorithm that takes $\widehat{\mathbf{m}}_j$ as the input and outputs $\widehat{w}_i,\widehat{\mu}_i$ such that, up to an index permutation $\Pi$,
\begin{align*}
    \abs{\widehat{w}_{\Pi(i)} - w_i} < C_k\cdot\frac{\delta^{\Omega(\frac{1}{k})}}{w_{\min}}
\end{align*}
and
\begin{align*}
    \abs{\widehat{\mu}_{\Pi(i)} - \mu_i} < C_k\cdot\frac{\delta^{\Omega(\frac{1}{k})}}{\Delta_{\min}}
\end{align*}
where $C_k$ is a constant depending on $k$ only.

\end{theorem}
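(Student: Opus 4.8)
The plan is to prove the theorem constructively, in a denoising step and a recovery step, followed by a stability analysis that propagates the $\delta$-error in the moments to the output. \textbf{Denoising:} the observed vector $(\widehat{\mathbf m}_1,\dots,\widehat{\mathbf m}_{2k-1})$ need not be the moment sequence of any genuine distribution, so first I would project it onto the set of valid moment sequences of measures on $\R$ with at most $k$ atoms. This set is cut out by positive semidefiniteness of the Hankel matrix $(\bar{\mathbf m}_{i+j})_{0\le i,j\le k-1}$ (with $\bar{\mathbf m}_0=1$, together with the localizing shifted-Hankel condition); it is closed and convex, and the true moments $\mathbf m_j$ lie in it. Since $\abs{\widehat{\mathbf m}_j-\mathbf m_j}<\delta$ for every $j$ and Euclidean projection onto a convex set is nonexpansive, the projection $\bar{\mathbf m}$ obeys $\abs{\bar{\mathbf m}_j-\mathbf m_j}=O(\sqrt k\,\delta)$, restoring feasibility at the cost of only a $\poly(k)$ factor. \textbf{Recovery by Gaussian quadrature:} from $\bar{\mathbf m}$ I would run the classical Prony/quadrature construction, solving the order-$k$ Hankel system to obtain the monic degree-$k$ polynomial orthogonal to all lower-degree polynomials under the (approximate) moment functional; its $k$ roots are the recovered atoms $\widehat\mu_i$ (equivalently the generalized eigenvalues of the pencil $(H_1,H_0)$ of shifted and unshifted Hankel matrices), and the weights $\widehat w_i$ follow by solving the $k\times k$ Vandermonde system $\sum_i\widehat w_i\,\widehat\mu_i^{\,j}=\bar{\mathbf m}_j$, $j=0,\dots,k-1$. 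Feasibility from the projection guarantees real roots and nonnegative weights.

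\textbf{Stability via moment comparison.} The crux is a moment-comparison lemma: any two measures on a bounded interval, each with at most $k$ atoms, whose first $2k-1$ moments agree up to $\delta'$, are $C_k(\delta')^{\Omega(1/k)}$-close in the Wasserstein-$1$ distance $W_1$. I would prove it by passing to cumulative distribution functions: writing $\pi=\nu-\widehat\nu$ and $G(x)=\pi((-\infty,x])$, integration by parts converts the moment bounds $\abs{\int x^j\,\dir\pi}\le\delta'$ into $\abs{\int x^i G(x)\,\dir x}\le\delta'$ for $i=0,\dots,2k-2$, while $W_1=\int\abs{G}\,\dir x$. Here $G$ is a compactly supported step function with at most $2k$ pieces whose first $2k-1$ functional moments are tiny; bounding its $L^1$ norm reduces to an approximation-theoretic estimate—approximate $\operatorname{sign}(G)$ by a polynomial of degree $2k-2$—in which the exponential-in-$k$ coefficient growth of such polynomials forces the fractional exponent $\Omega(1/k)$. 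Applying the lemma to $\nu=\sum_i w_i\delta_{\mu_i}$ and the recovered $\widehat\nu=\sum_i\widehat w_i\delta_{\widehat\mu_i}$, both $k$-atomic with $2k-1$ moments $O(\sqrt k\,\delta)$-close by construction, gives $W_1(\nu,\widehat\nu)\le C_k\,\delta^{\Omega(1/k)}$.

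\textbf{From Wasserstein to per-atom bounds.} Finally I would localize. The hypothesis $w_{\min}\Delta_{\min}=\Omega(\delta^{O(1/k)})$ makes $W_1(\nu,\widehat\nu)$ small relative to $w_{\min}\Delta_{\min}$, so in the optimal transport coupling no recovered atom can spread its mass across two $\Delta_{\min}$-separated true atoms; this pins down a unique matching permutation $\Pi$. A careful accounting of the transport cost under $\Pi$—tracking separately the cost of displacing an atom and of correcting its weight, and dividing by the appropriate separation and weight scales—then upgrades the global bound to the stated per-coordinate guarantees $\abs{\widehat w_{\Pi(i)}-w_i}<C_k\,\delta^{\Omega(1/k)}/w_{\min}$ and $\abs{\widehat\mu_{\Pi(i)}-\mu_i}<C_k\,\delta^{\Omega(1/k)}/\Delta_{\min}$.

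\textbf{Main obstacle.} The hard part is the moment-comparison lemma, and specifically obtaining the correct $\delta^{\Omega(1/k)}$ rate rather than a vacuous bound. Inverting the truncated moment map is severely ill-conditioned when atoms nearly collide: the coefficients of the interpolating polynomial, equivalently the inverse Hankel and Vandermonde matrices, blow up exponentially in $k$, and it is exactly the $k$-th-root scaling forced by clustered supports that both caps the achievable rate at $\delta^{1/k}$ and dictates the separation requirement $w_{\min}\Delta_{\min}=\Omega(\delta^{O(1/k)})$. Certifying that this rate is tight against the information-theoretic lower bound is the delicate quantitative core of the argument.
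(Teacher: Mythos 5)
This theorem is not proved in the paper at all: it is imported verbatim from \cite{wu2018optimal}, with Algorithm~\ref{alg:denoise} reproducing that paper's procedure, so the only meaningful comparison is against the cited proof. Your sketch---projection onto the Hankel-characterized truncated moment space, Prony/Gauss-quadrature recovery of atoms and weights, a $W_1$ moment-comparison lemma with rate $\delta^{\Omega(1/k)}$, and separation-based localization to per-parameter bounds---is essentially exactly the denoised method of moments argument of \cite{wu2018optimal}, with the one benign caveat that the convex set you project onto should be described as the full truncated moment space of measures on $[-M,M]$ (the set of moment vectors of measures with at most $k$ atoms per se is not convex, but for moments up to order $2k-1$ it coincides with the full moment space via quadrature representation, which is what makes both your nonexpansiveness step and the $k$-atomic recovery step valid).
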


Unfortunately, unlike with fully observed mixtures of Gaussians, estimating these moments is no longer straightforward.
As we will see in our proof, looking for unbiased estimators relies on specific structures of Gaussians.
When the data is censored, such structures may not exist.
Hence, we look for a biased estimator and provide delicate analysis to bound the bias.
To see how we can estimate $\mathbf{m}_j$, we first express the mixture as an expression that is in terms of $\mathbf{m}_j$.
Suppose $X$ is the random variable drawn from the sampling procedure conditioned on non-\textsf{FAIL} samples.
For any function $f$, the expectation of $f(X)$ is 
\begin{align*}
    \E(f(X)) 
    & =
    \int_{S} f(x) \cdot \left(\sum_{i=1}^k w_i g_{\mu_i,1,S}(x)\right)\dir x  
    =
    \frac{1}{\alpha}\cdot \int_{S} f(x) \cdot \left(\sum_{i=1}^k w_i g_{\mu_i,1}(x)\right)\dir x.\numberthis\label{eq:exp_f}
\end{align*}
Recall that $\alpha$ is the probability mass $\sum_{i=1}^k w_iI_{\mu_i,1}(S)$.
Note that, for any $\mu$,
\begin{align*}
    g_{\mu,1}(x) &= \frac{1}{\sqrt{2\pi}}e^{-\frac{1}{2}(x-\mu)^2} 
     = \frac{1}{\sqrt{2\pi}}e^{-\frac{1}{2}x^2}e^{x\mu-\frac{1}{2}\mu^2} 
     =
    \frac{1}{\sqrt{2\pi}}e^{-\frac{1}{2}x^2}\sum_{j=0}^\infty h_{j}(x)\frac{\mu^j}{j!}\numberthis\label{eq:g_hermite}
\end{align*}
where $h_j$ is the $j$-th Hermite polynomial and the last equality is from the fact \eqref{eq:exp_hermite}.
In other words, when we plug \eqref{eq:g_hermite} into \eqref{eq:exp_f}, we have
\begin{align*}
    \alpha\cdot\E(f(X)) 
    & =
    \int_{S} f(x) \cdot \left(\sum_{i=1}^k w_i\cdot\left(\frac{1}{\sqrt{2\pi}}e^{-\frac{1}{2}x^2}\sum_{j=0}^\infty h_{j}(x)\frac{\mu_i^j}{j!}\right)\right)\dir x \\
    & =
    \sum_{j=0}^\infty \left(\int_{S} f(x)\cdot \frac{1}{\sqrt{2\pi}j!}e^{-\frac{1}{2}x^2}h_j(x) \dir x\right)\cdot\left(\sum_{i=1}^kw_i\mu_i^j\right).
\end{align*}
To ease the notation, for any function $f$ and positive integer $j$, we define
\begin{align*}
    J_{f,j}:=\int_{S} f(x)\cdot \frac{1}{\sqrt{2\pi}j!}e^{-\frac{1}{2}x^2}h_j(x) \dir x.\numberthis\label{eq:j_def}
\end{align*}
If we plug $J_{f,j}$ and $\mathbf{m}_j$ into the equation for $\alpha\cdot\E(f(X))$, we have
\begin{align*}
    \alpha\cdot \E(f(X)) = \sum_{j=0}^\infty J_{f,j}\cdot \mathbf{m}_j. \numberthis\label{eq:expf}
\end{align*}
Ideally, \emph{if} we manage to find $2k-1$ functions $f_1,\dots,f_{2k-1}$ such that
\begin{align*}
    J_{f_i,j} = \begin{cases}
        1 \qquad \text{if $i=j$} \\
        0 \qquad \text{if $i\neq j$.}
    \end{cases}
\end{align*}
then we have
\begin{align*}
    \alpha\cdot \E(f_i(X)) = \mathbf{m}_i \qquad\text{for all $i=1,\dots,2k-1$}.
\end{align*}
It means that we will have an unbiased estimator for $\mathbf{m}_i$ and therefore we just need to find out the amount of samples we need by bounding the variance.
Indeed, if $S=\mathbb{R}$ and we pick $f_i$ to be the $i$-th Hermite polynomial $h_i$ then the aforementioned conditions hold.
It is how \cite{wu2018optimal} managed to show their result.
However, when $S\neq \mathbb{R}$, it becomes trickier.

A natural extension is to pick $f_i$ to be a linear combination of Hermite polynomials, i.e.
\begin{align*}
    f_i = \sum_{a=0}^{\ell-1} \beta_{i,a} h_{a}
\end{align*}
for some positive integer $\ell$.
The integer $\ell$ is a parameter indicating how accurate our estimator is.
Indeed, this $\ell\rightarrow \infty$ as $\eps\rightarrow 0$ as we will show in our proof.
For each $f_i$, there are $\ell$ coefficients $\beta_{i,j}$ in the expression and therefore we can enforce $\ell$ terms  of $J_{f_i,j}$ to be the desired values.
More precisely, we can set $\beta_{i,a}$ such that
\begin{align*}
    \MoveEqLeft J_{f_i,j} 
    = 
    \int_{S} f_i(x)\cdot \frac{1}{\sqrt{2\pi}j!}e^{-\frac{1}{2}x^2}h_j(x) \dir x\\
    & =
    \sum_{a=0}^{\ell-1} \beta_{i,j}\int_{S} h_a(x)\cdot \frac{1}{\sqrt{2\pi}j!}e^{-\frac{1}{2}x^2}h_j(x) \dir x\\
    & =
    \sum_{a=0}^{\ell-1} \beta_{i,a}J_{h_a,j} 
    =
    \begin{cases}
        1 \qquad \text{if $i=j$}\\
        0 \qquad \text{if $i\neq j$}
    \end{cases}
\end{align*}
for $j=0,\dots,\ell-1$.
If we assume the integrals can be computed exactly, then all $J_{h_a,j}$ are known.
Hence, we can solve $\beta_{i,a}$ by solving this system of linear equations.

Now, if we plug them into \eqref{eq:expf} then we have
\begin{align*}
    \alpha \cdot \E(f_i(X))= \mathbf{m}_i + \underbrace{\sum_{j=\ell}^\infty J_{f_i,j}\cdot \mathbf{m}_j}_{:=\mathcal{E}_i}.
\end{align*}
Note that the term $\mathbf{m}_i$ is what we aim at and hence the term $\mathcal{E}_i$ is the error term.
Indeed, our estimator is a biased estimator where the bias is $\mathcal{E}_i$.
Thanks to the factor $\frac{1}{j!}$ in the term $J_{f_i,j}$, intuitively, the term $\mathcal{E}_i \rightarrow 0$ as $\ell \rightarrow 0$.

Define our estimator $\widehat{\mathbf{m}}_i$ to be 
\begin{align*}
    \widehat{\mathbf{m}}_i=\frac{1}{n}\left(\sum_{s=1}^{n'}f_i(x_s)\right) \numberthis\label{eq:estimator}
\end{align*}
where $n'$ is the number of samples that are non-\textsf{FAIL} and $x_i$ are the non-\textsf{FAIL} samples.
Note that, on average, the term $\frac{1}{n} = \frac{\alpha}{n'}$ gives us the factor $\alpha$ implicitly.
Then, by Chebyshev's inequality, we have
\begin{align*}
    \abs{\widehat{\mathbf{m}}_i - \mathbf{m}_i} < \delta + \abs{\mathcal{E}_i} \quad \text{with probability $1-\frac{\Var(\widehat{\mathbf{m}}_i)}{\delta^2}$.}
\end{align*}
Now, we break the problem down to the following two subproblems.
\begin{itemize}
    \item How large $\ell$ needs to be in order to make $\abs{\mathcal{E}_i} < \delta$?
    \item Given $\delta>0$, how many samples do we need to make the variance $\Var(\widehat{\mathbf{m}}_i) < \frac{\delta^2}{100}$ and hence the success probability larger than $\frac{99}{100}$?
\end{itemize}
Detailed proofs are deferred to the appendix.

\subsection{Bounds for the Number of Terms}\label{sec:how_large_ell}

To see how large $\ell$ needs to be, we first define the following notations.
Let $v^{(j)}$ be the $\ell$-dimensional vector whose $a$-th entry is $J_{h_a,j}$, i.e.
\begin{align*}
    v^{(j)} = 
    \begin{bmatrix}
    J_{h_0,j} &
    J_{h_1,j} &
    \cdots &
    J_{h_{\ell-1},j}
    \end{bmatrix}^\top,
\end{align*}
and $V$ be the the $\ell$-by-$\ell$ matrix whose $r$-th row is $(v^{(r)})^\top$, i.e.
\begin{align*}
    V =
    \begin{bmatrix}
    v^{(0)} &
    v^{(1)} &
    \cdots &
    v^{(\ell-1)}
    \end{bmatrix}^\top. \numberthis \label{eq:v_def}
\end{align*}
Recall that, by the definition of $\beta_{i,a}$, $\beta_{i,a}$ satisfies 
\begin{align*}
    \sum_{a=0}^{\ell-1} \beta_{i,a}J_{h_a,j} =
    \begin{cases}
        1 \qquad \text{if $i=j$}\\
        0 \qquad \text{if $i\neq j$}
    \end{cases}.
\end{align*}
We can rewrite it as a system of linear equations.
\begin{align*}
    V\beta_i=\mathbf{e}_i \numberthis\label{eq:sys_eq}
\end{align*}
where $\beta_i$ is the $\ell$-dimensional vector whose $a$-th entry is $\beta_{i,a}$ and $\mathbf{e}_i$ is the $\ell$-dimensional canonical vector which is a zero vector except that the $i$-th entry is $1$, i.e. 
\begin{align*}
    \beta_i=
    \begin{bmatrix}
    \beta_{i,0}&
    \beta_{i,1} &
    \cdots &
    \beta_{i,\ell-1}
    \end{bmatrix}^\top
\end{align*}
and
\begin{align*}
    \mathbf{e}_i=
    \begin{bmatrix}
    0 &
    \cdots &
    1 &
    \cdots &
    0
    \end{bmatrix}^\top.
\end{align*}
Namely, we have $\beta_i=V^{-1}\mathbf{e}_i$.
Recall that the definition of $\mathcal{E}_i$ is 
\begin{align*}
    \mathcal{E}_i=\sum_{j=\ell}^\infty J_{f_i,j}\cdot \mathbf{m}_j.
\end{align*}
To bound the term $J_{f_i,j}$, observe that 
\begin{align*}
    J_{f_i,j} = \sum_{a=0}^{\ell-1}\beta_{i,a}J_{h_a,j} = (v^{(j)})^\top V^{-1}\mathbf{e}_i
\end{align*}
and, by Cramer's rule, $J_{f_i,j}$ can be expressed as
\begin{align*}
    J_{f_i,j}=\frac{\det(V^{(i\rightarrow j)})}{\det(V)}
\end{align*}
where $V^{(i\rightarrow j)}$ is the same matrix as $V$ except that the $i$-th row is replaced with $v^{(j)}$, i.e.
\begin{align*}
    V^{(i\rightarrow j)} 
    &=
    \begin{bmatrix}
    v^{(0)} &  \cdots & v^{(i-1)} & v^{(j)} & v^{(i+1)} & \cdots & v^{(\ell-1)}
    \end{bmatrix}^\top \numberthis\label{eq:v_i_j_def}
\end{align*}
for $i=1,\dots,2k-1$ and $j\geq \ell$.
The right arrow in the superscript indicates the row replacement.
We preview that there are column replacements in our calculation and we will use left arrows to indicate it.

In Lemma \ref{lem:det_v_i_j}, we show that 
\begin{align*}
    \abs{J_{f_i,j}}=\frac{\abs{\det(V^{(i\rightarrow j)})}}{\abs{\det(V)}}
    \leq
    \frac{1}{2^{\Omega(j\log j)}}.
\end{align*}
Also, by the assumption that $\abs{\mu_i}<M$ where $M$ is a constant, we have $\mathbf{m}_j \leq M^j$.
Hence, we prove that
\begin{align*}
    \abs{\mathcal{E}_i}
    & \leq
    \sum_{j=\ell}^\infty \abs{J_{f_i,j}}\abs{\mathbf{m}_j}
    \leq
    \sum_{j=\ell}^\infty\frac{1}{2^{\Omega(j\log j)}} \cdot M^j 
    \leq
    \frac{1}{2^{\Omega(\ell\log \ell)}} \cdot M^\ell
    \leq \delta
\end{align*}
as long as $\ell=\Omega(\frac{\log \frac{1}{\delta}}{\log\log \frac{1}{\delta}})$.

Hence, we have the following lemma.
\begin{lemma}\label{lem:ell}
For a sufficiently small $\delta>0$, when $\ell=\Omega(\frac{\log\frac{1}{\delta}}{\log\log\frac{1}{\delta}})$, the estimators $\widehat{\mathbf{m}}_i$ computed by \eqref{eq:estimator} satisfies
\begin{align*}
    \abs{\widehat{\mathbf{m}}_i - \mathbf{m}_i} < 2\delta \qquad \text{with probability $1-\frac{\Var(\widehat{\mathbf{m}}_i)}{\delta^2}$.}
\end{align*}
\end{lemma}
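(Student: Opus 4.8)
The plan is to split the error $\widehat{\mathbf{m}}_i - \mathbf{m}_i$ into a deterministic bias and a random fluctuation, bound each in turn, and combine them by the triangle inequality. First I would pin down $\E[\widehat{\mathbf{m}}_i]$. Since the number of non-\textsf{FAIL} draws $n'$ is itself random---each of the $n$ trials independently lands in $S$ with probability $\alpha$, and conditioned on $n'=m$ the observed $x_1,\dots,x_m$ are i.i.d.\ copies of $X$---conditioning on $n'$ (equivalently, Wald's identity) gives $\E\big[\sum_{s=1}^{n'}f_i(x_s)\big]=\E[n']\,\E[f_i(X)]=\alpha n\,\E[f_i(X)]$, so that $\E[\widehat{\mathbf{m}}_i]=\alpha\,\E[f_i(X)]$. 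Substituting \eqref{eq:expf} and using the defining relations $J_{f_i,j}=1$ for $j=i$ and $J_{f_i,j}=0$ for the remaining indices $j\le\ell-1$, the first $\ell$ terms of the series collapse to $\mathbf{m}_i$, leaving $\E[\widehat{\mathbf{m}}_i]=\mathbf{m}_i+\mathcal{E}_i$ with $\mathcal{E}_i=\sum_{j=\ell}^\infty J_{f_i,j}\,\mathbf{m}_j$. Hence the bias is exactly $\mathcal{E}_i$.

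Next I would bound the bias $\mathcal{E}_i$. Invoking the determinant-ratio estimate $\abs{J_{f_i,j}}\le 2^{-\Omega(j\log j)}$ from Lemma \ref{lem:det_v_i_j} together with $\abs{\mathbf{m}_j}\le M^j$ (assumption (A2)), I get $\abs{\mathcal{E}_i}\le\sum_{j\ge\ell}M^j\,2^{-\Omega(j\log j)}$. Writing $M^j=2^{j\log M}$, each summand equals $2^{j(\log M-\Omega(\log j))}$, which for $j\ge\ell$ with $\ell$ large (the ``sufficiently small $\delta$'' regime, since small $\delta$ forces large $\ell$) is dominated by its value at $j=\ell$ and decays super-geometrically thereafter; summing the geometric-type tail then yields $\abs{\mathcal{E}_i}\le 2^{-\Omega(\ell\log\ell)}M^\ell$. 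Forcing this below $\delta$ amounts to $\Omega(\ell\log\ell)-\ell\log M\ge\log\frac1\delta$, and since $\ell\log\ell$ dominates the lower-order $\ell\log M$, the requirement is $\ell\log\ell\gtrsim\log\frac1\delta$; substituting $\ell=\Theta\!\big(\frac{\log(1/\delta)}{\log\log(1/\delta)}\big)$ gives $\log\ell=\Theta(\log\log\frac1\delta)$ and hence $\ell\log\ell=\Theta(\log\frac1\delta)$, so the stated choice $\ell=\Omega\!\big(\frac{\log(1/\delta)}{\log\log(1/\delta)}\big)$ is exactly what makes $\abs{\mathcal{E}_i}\le\delta$. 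I expect this to be the main obstacle: one must verify carefully that the super-geometric tail collapses onto its leading term and that the growing factor $M^j$ does not overwhelm the $2^{-\Omega(j\log j)}$ decay, which is what dictates the precise $\ell$-vs-$\delta$ tradeoff.

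Finally I would handle the fluctuation by Chebyshev's inequality applied to the centered estimator: $\abs{\widehat{\mathbf{m}}_i-\E[\widehat{\mathbf{m}}_i]}<\delta$ holds with probability at least $1-\frac{\Var(\widehat{\mathbf{m}}_i)}{\delta^2}$. Since the bias is deterministic, no union bound is needed, and on this event the triangle inequality gives
\begin{align*}
    \abs{\widehat{\mathbf{m}}_i-\mathbf{m}_i}\le\abs{\widehat{\mathbf{m}}_i-\E[\widehat{\mathbf{m}}_i]}+\abs{\mathcal{E}_i}<\delta+\delta=2\delta,
\end{align*}
which is precisely the claimed statement. The actual size of $\Var(\widehat{\mathbf{m}}_i)$---and hence the sample count needed to drive the failure probability below $\frac{1}{100}$---is quantified separately in the companion variance lemma and is not required for this step.
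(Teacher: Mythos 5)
Your proposal is correct and follows essentially the same route as the paper: decompose the error into the deterministic bias $\mathcal{E}_i=\sum_{j\geq\ell}J_{f_i,j}\mathbf{m}_j$ plus a random fluctuation, bound the bias by $\delta$ via the determinant-ratio estimate of Lemma \ref{lem:det_v_i_j} together with $\abs{\mathbf{m}_j}\leq M^j$, and control the fluctuation by Chebyshev's inequality, combining via the triangle inequality. Your only addition is spelling out $\E[\widehat{\mathbf{m}}_i]=\alpha\,\E[f_i(X)]$ via conditioning on $n'$, which the paper handles more informally (``the term $\frac{1}{n}=\frac{\alpha}{n'}$ gives us the factor $\alpha$ implicitly''), but this is the same argument.
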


\subsection{Bounds for the Variance}\label{sec:how_large_var}

Recall that our second subproblem is to bound the variance of our estimator.
To bound $\Var(\widehat{\mathbf{m}}_i)$, observe that
\begin{align*}
    \Var(\widehat{\mathbf{m}}_i)
    & \leq
    \E(\widehat{\mathbf{m}}_i^2) 
    = 
    \frac{\alpha}{n}\E(f_i(X)^2) \\
    & =
    \frac{\alpha}{n}\E\bigg(\big(\sum_{a=0}^{\ell-1}\beta_{i,a}h_a(X)\big)^2\bigg) \\
    & \leq
    \frac{\alpha}{n}\left(\sum_{a=0}^{\ell-1}\abs{\beta_{i,a}}\sqrt{\E\left(h_a(X)^2\right)}\right)^2 \numberthis\label{eq:main_var}
\end{align*}
By expanding the expectation explicitly,
\begin{align*}
    \E\left(h_a(X)^2\right)
    & =
    \int_{S} h_a(x)^2\cdot \left(\sum_{i=1}^k w_i g_{\mu_i,1,S}(x)\right) \dir x \\
    & \leq
    \frac{1}{\alpha} \int_{\mathbb{R}} h_a(x)^2\cdot \left(\sum_{i=1}^k w_i g_{\mu_i,1}(x)\right) \dir x \\
    & \leq
    \frac{1}{\alpha}(O(M+\sqrt{a}))^{2a} \numberthis\label{eq:exp_hermite_2}
\end{align*}
The last line comes from \cite{wu2018optimal} where they showed that
\begin{align*}
    \int_{\mathbb{R}} h_a(x)^2\cdot \left(\sum_{i=1}^k w_i g_{\mu_i,1}(x)\right) \dir x \leq (O(M+\sqrt{a}))^{2a}
\end{align*}
in Lemma 5 of \cite{wu2018optimal}.

Now, we also need to bound $\abs{\beta_{i,a}}$.
Recall that 
\begin{align*}
    \beta_i =V^{-1}\mathbf{e}_i.
\end{align*}
By Cramer's rule,  each coordinate of $\beta_i$ is 
\begin{align*}
    \beta_{i,a}=\frac{\det(V^{(\mathbf{e}_i\leftarrow a)})}{\det(V)}
\end{align*}
where $V^{(\mathbf{e}_i\leftarrow a)}$ is the same matrix as $V$ except that the $a$-th column is replaced with $\mathbf{e}_i$, i.e.
\begin{align*}
    & V^{(\mathbf{e}_i\leftarrow a)} \\
    & =
    \begin{bmatrix}
    v^{(0)}_0 & \cdots & v^{(0)}_{a-1} & 0 & v^{(0)}_{a+1} & \cdots & v^{(0)}_{\ell-1} \\
    \vdots & \ddots & \vdots & \vdots & \vdots & \ddots & \vdots \\
    v^{(i)}_0 & \cdots & v^{(i)}_{a-1} & 1 & v^{(i)}_{a+1} & \cdots & v^{(i)}_{\ell-1} \\
    \vdots & \ddots & \vdots & \vdots & \vdots & \ddots & \vdots \\
    v^{(\ell-1)}_0 & \cdots & v^{(\ell-1)}_{a-1} & 0 & v^{(\ell-1)}_{a+1} & \cdots & v^{(\ell-1)}_{\ell-1} 
    \end{bmatrix}\numberthis\label{eq:v_i_a_def}
\end{align*}

In Lemma \ref{lem:det_v_i_a}, we show that 
\begin{align*}
    \abs{\beta_{i,a}}
    & \leq 
    2^{O(\ell\log \ell )}. \numberthis\label{eq:beta_i_a}
\end{align*}
Therefore, if we plug \eqref{eq:exp_hermite_2} and \eqref{eq:beta_i_a} into \eqref{eq:main_var}, we have the following lemma.
\begin{lemma}\label{lem:var}
For any positive integer $\ell$, the estimator $\widehat{\mathbf{m}}_i$ computed by \eqref{eq:estimator} has variance
\begin{align*}
    \Var(\widehat{\mathbf{m}}_i)
    & \leq
    \frac{1}{n}\cdot 2^{O(\ell\log\ell)}.
\end{align*}
\end{lemma}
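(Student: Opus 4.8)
The plan is to realize the chain of inequalities already sketched in \eqref{eq:main_var}, treating the two quantities $\abs{\beta_{i,a}}$ and $\E(h_a(X)^2)$ as the only nontrivial ingredients and importing their bounds from Lemma~\ref{lem:det_v_i_a} and from \eqref{eq:exp_hermite_2}, respectively. Before that, I would carefully justify the opening step $\Var(\widehat{\mathbf{m}}_i)\leq \frac{\alpha}{n}\E(f_i(X)^2)$. Although the estimator \eqref{eq:estimator} is written as a sum over the random number $n'$ of non-\textsf{FAIL} samples divided by the fixed count $n$, it is exactly an average of $n$ i.i.d.\ terms: for each of the $n$ trials set $Z_s=f_i(x_s)$ when the trial is non-\textsf{FAIL} and $Z_s=0$ otherwise, so that $\widehat{\mathbf{m}}_i=\frac{1}{n}\sum_{s=1}^n Z_s$. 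Independence of the trials then gives $\Var(\widehat{\mathbf{m}}_i)=\frac{1}{n}\Var(Z_1)\leq \frac{1}{n}\E(Z_1^2)=\frac{\alpha}{n}\E(f_i(X)^2)$, where the final equality uses that a trial is non-\textsf{FAIL} with probability $\alpha$ and $X$ is the conditional (non-\textsf{FAIL}) variable.

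Next I would expand $f_i=\sum_{a=0}^{\ell-1}\beta_{i,a}h_a$ and apply the triangle inequality in $L^2(X)$ to obtain $\sqrt{\E(f_i(X)^2)}\leq \sum_{a=0}^{\ell-1}\abs{\beta_{i,a}}\sqrt{\E(h_a(X)^2)}$, which is precisely the last line of \eqref{eq:main_var}. Then I substitute the two imported estimates: the Hermite second-moment bound $\E(h_a(X)^2)\leq \frac{1}{\alpha}(O(M+\sqrt{a}))^{2a}$ from \eqref{eq:exp_hermite_2} (which follows by extending the integral from $S$ to $\R$ and invoking Lemma 5 of \cite{wu2018optimal}), and the coefficient bound $\abs{\beta_{i,a}}\leq 2^{O(\ell\log\ell)}$ from Lemma~\ref{lem:det_v_i_a}, stated as \eqref{eq:beta_i_a}. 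Pulling the factor $\alpha^{-1/2}$ out of every square root cancels the leading $\alpha$, leaving $\Var(\widehat{\mathbf{m}}_i)\leq \frac{1}{n}\bigl(\sum_{a=0}^{\ell-1}\abs{\beta_{i,a}}(O(M+\sqrt{a}))^a\bigr)^2$. Since $a<\ell$ and $M=O(1)$, each factor $(O(M+\sqrt{a}))^a$ is at most $(O(\sqrt{\ell}))^\ell=2^{O(\ell\log\ell)}$, so each of the $\ell$ summands is $2^{O(\ell\log\ell)}$; summing $\ell$ of them and squaring preserves this form, yielding the claimed $\Var(\widehat{\mathbf{m}}_i)\leq \frac{1}{n}2^{O(\ell\log\ell)}$.

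Conditional on the two imported bounds, the work here is essentially bookkeeping, and the genuine difficulty is deferred to Lemma~\ref{lem:det_v_i_a}, i.e.\ to controlling the determinant ratio defining $\beta_{i,a}$ through Cramer's rule, the Cauchy--Binet formula, and Schur polynomials. The one subtle point I would verify inside the proof of Lemma~\ref{lem:var} itself is that the $\alpha$ in the variance prefactor and the $1/\alpha$ in the Hermite moment bound cancel exactly rather than accumulating, and more generally that all the normalizing masses $\alpha$ and $I_{\mu_i,1}(S)$ are harmless $\Theta(1)$ constants under assumptions (A1)--(A2) with $M,R=O(1)$; the absorption of the extra factor $\ell$ (the number of terms) into the exponent $2^{O(\ell\log\ell)}$ should also be noted explicitly.
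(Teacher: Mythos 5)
Your proof is correct and follows essentially the same route as the paper: bound $\Var(\widehat{\mathbf{m}}_i)\leq\frac{\alpha}{n}\E(f_i(X)^2)$, expand $f_i$ in the Hermite basis via the triangle inequality in $L^2$, and import the two bounds \eqref{eq:exp_hermite_2} and \eqref{eq:beta_i_a}, letting the $\alpha$ and $1/\alpha$ factors cancel. In fact, your opening step—writing $\widehat{\mathbf{m}}_i$ as an average of $n$ i.i.d.\ zero-padded terms $Z_s$ so that $\Var(\widehat{\mathbf{m}}_i)=\frac{1}{n}\Var(Z_1)\leq\frac{1}{n}\E(Z_1^2)=\frac{\alpha}{n}\E(f_i(X)^2)$—is a cleaner justification than the paper's chain $\Var(\widehat{\mathbf{m}}_i)\leq\E(\widehat{\mathbf{m}}_i^2)=\frac{\alpha}{n}\E(f_i(X)^2)$, whose middle equality as written overlooks the cross terms $\frac{n-1}{n}\bigl(\E(Z_1)\bigr)^2$.
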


\subsection{Full Algorithm and Main Theorem}

\begin{algorithm}[!t]
	\caption{Learning mixtures of Gaussians with censored data}\label{alg:main}
	{\bf Input:} $n$ iid samples $x_1,\dots,x_n$, number of Gaussians $k$, parameter $\ell$, mean boundary parameter $M$, sample domain $S=[-R,R]$
	\begin{algorithmic}[1]
	    \FOR{$i=0$ to $2k-1$}
	    \STATE solve \eqref{eq:sys_eq} to obtain $\beta_{i} = (\beta_{i,0}, \beta_{i,1},\dots,\beta_{i,\ell-1})^\top$, i.e. solve the following system of linear equations
	    \begin{align*}
	        V\beta_i=\mathbf{e}_i
	    \end{align*}
	    where the $(r,c)$-entry of $V$ is 
	    \begin{align*}
	        \int_{S}  \frac{1}{\sqrt{2\pi}r!}e^{-\frac{1}{2}x^2}h_c(x)h_r(x) \dir x \numberthis \label{eq:alg_int}
	    \end{align*}
	    and $\mathbf{e}_i$ is the canonical vector
	    \ENDFOR
		\FOR{each sample $x_s$}
		\STATE compute $\widehat f_i(x_s) := \begin{cases}
		    f_i(x_s) &\text{if $x_s$ is non-\textsf{FAIL}} \\
		    0 &\text{if $x_s$ is \textsf{FAIL}}
		\end{cases}$; \mbox{recall} that $f_i$ is
		\begin{align*}
		    f_i(x) = \sum_{a=0}^{\ell-1} \beta_{i,a}h_a(x)
		\end{align*}
		and $h_a$ is the $a$-th Hermite polynomial
		\begin{align*}
		    h_a(x) = a!\sum_{j=0}^{\lfloor a/2\rfloor} \frac{(-1/2)^j}{j!(a-2j)!} x^{a-2j}
		\end{align*}
        \ENDFOR
        \FOR{$i=1$ to $2k-1$}
        \STATE compute $\widehat{\mathbf{m}}_i = \frac{1}{n}\sum_{s=1}^n \widehat f_i(x_s)$ which is the same as the estimator defined in $\eqref{eq:estimator}$
        \ENDFOR
        \STATE let $\widehat{w}_1,\widehat{w}_2,\dots,\widehat{w}_k$ and  $\widehat{\mu}_1,\widehat{\mu}_2,\cdots,\widehat{\mu}_k$ be the output of Algorithm \ref{alg:denoise} using $\widehat{\mathbf{m}} = (\widehat{\mathbf{m}}_1,\dots,\widehat{\mathbf{m}}_{2k-1})$ and $M$ as the input
	\end{algorithmic}
	{\bf Output:} estimated weights $\widehat{w}_1,\widehat{w}_2,\dots,\widehat{w}_k$ and estimated means $\widehat{\mu}_1,\widehat{\mu}_2,\cdots,\widehat{\mu}_k$
\end{algorithm}

In this subsection, we will present the full algorithm and combine with the analysis in the previous subsections to prove our main theorem.

\begin{proof}[Proof of Theorem \ref{thm:main}]

Suppose we are given $n$ iid samples $x_1,\dots,x_n$ from the distribution \eqref{eq:model}.
We will show that the estimated weights $\widehat{w}_1,\widehat{w}_2,\dots,\widehat{w}_k$ and the estimated means $\widehat{\mu}_1,\widehat{\mu}_2,\cdots,\widehat{\mu}_k$ outputted by Algorithm \ref{alg:main} taking $x_1,\dots,x_n$ as the input satisfy the desired guarantees.

By Lemma \ref{lem:ell}, when $\ell = \Omega(\frac{\log \frac{1}{\delta}}{\log\log\frac{1}{\delta}})$, we have
\begin{align*}
    \abs{\widehat{\mathbf{m}}_i - \mathbf{m}_i} < 2\delta \qquad \text{with probability $1-\frac{\Var(\widehat{\mathbf{m}}_i)}{\delta^2}$}
\end{align*}
where $\widehat{\mathbf{m}}_i$ are computed in Algorithm \ref{alg:main}.
Moreover, by Lemma \ref{lem:var}, we show that 
\begin{align*}
    \Var(\widehat{\mathbf{m}}_i)
    & \leq
    \frac{1}{n}\cdot 2^{O(\ell\log \ell )}
\end{align*}
which implies when $\ell = \Omega(\frac{\log \frac{1}{\delta}}{\log\log\frac{1}{\delta}})$ the failure probability is less than 
\begin{align*}
    \frac{\Var(\widehat{\mathbf{m}}_i)}{\delta^2}
    & \leq
    \frac{1}{n}\cdot \poly(\frac{1}{\delta}).
\end{align*}
By applying the union bound over all $i=1,2,\dots,2k-1$, when $n = \Omega( \poly(\frac{1}{\delta}))$, we have
\begin{align*}
    \abs{\widehat{\mathbf{m}}_i - \mathbf{m}_i} < 2\delta \qquad \text{with probability $\frac{99}{100}$.}
\end{align*}

In \cite{wu2018optimal}, they showed that Algorithm \ref{alg:denoise} is the algorithm that makes the guarantees hold in Theorem \ref{thm:denoised}.
Therefore, if we pick $\delta=\eps^{\Omega(k)}$ along with the assumption $w_{\min}\Delta_{\min} = \Omega(\eps)$, we have, up to an index permutation $\Pi$,
\begin{align*}
    \abs{\widehat{w}_{\Pi(i)}-w_i}<\eps,\qquad \abs{\widehat{\mu}_{\Pi(i)}-\mu_i}<\eps\qquad \text{for $i=1,\dots,k$}.
\end{align*}

We now examine the running time of Algorithm \ref{alg:main}. 
It first takes $k\cdot \poly(\ell)$ time\footnote{Computing the integral in \eqref{eq:alg_int} can be reduced to computing the integral $\int_0^z e^{-\frac{1}{2}t^2}\dir t$ by observing $h_c$ and $h_r$ are polynomials and using integration by parts.
If we remove the assumption that the exact computation can be done, we will need to approximate the integral up to an additive error of $1/2^{\poly(k,\ell)}$.
One can approximate the integral in an exponential convergence rate by Taylor expansion and hence the running time is still $k\cdot \poly(\ell)$ for this step.} 
to obtain $\beta_i$.
Then, it takes $n\cdot k\cdot \poly(\ell)$ to compute $\widehat{\mathbf{m}}_i$.
Finally, the running time for Algorithm \ref{alg:denoise} is $\poly(k)$.
Hence, by plugging $\ell = O(k\log\frac{1}{\eps})$, the running time of Algorithm \ref{alg:main} is $n\cdot\poly(k,\frac{1}{\eps})$.

\end{proof}

\begin{algorithm}[t]
	\caption{Denoised method of moments \cite{wu2018optimal}}\label{alg:denoise}
	{\bf Input:} estimated moments $\widehat{\mathbf{m}} = (\widehat{\mathbf{m}}_1,\dots,\widehat{\mathbf{m}}_{2k-1})$, mean boundary parameter $M$
	\begin{algorithmic}[1]
	    \STATE let $\mathbf{m}^* = (\mathbf{m}^*_1,\dots,\mathbf{m}^*_{2k-1})$ be the optimal solution of the following convex optimization problem 
	    \begin{align*}
	        & \arg\max_{\mathbf{m}} \norm{\widehat{\mathbf{m}} - \mathbf{m}} \\
	        \text{s.t. } & M\cdot \mathbf{M}_{0,2k-2} \succcurlyeq \mathbf{M}_{1,2k-1} \succcurlyeq -M \cdot\mathbf{M}_{0,2k-2}
	    \end{align*}
	    where $\mathbf{M}_{i,j}$ is the Hankel matrix whose entries are $\mathbf{m}_i,\dots,\mathbf{m}_{j}$, i.e.
	    \begin{align*}
	        \mathbf{M}_{i,j}=
	        \begin{bmatrix}
	        \mathbf{m}_i & \mathbf{m}_{i+1}& \cdots & \mathbf{m}_{\frac{i+j}{2}}\\
	        \mathbf{m}_{i+1} & \mathbf{m}_{i+2}& \cdots & \mathbf{m}_i \\
	        \vdots & \vdots& \ddots & \vdots\\
	        \mathbf{m}_{\frac{i+j}{2}} & \mathbf{m}_{\frac{i+j}{2}+1}& \cdots & \mathbf{m}_j 
	        \end{bmatrix}
	    \end{align*}
	    \STATE let $\widehat{\mu}_1,\widehat{\mu}_2,\cdots,\widehat{\mu}_k$ be the roots of the polynomial $P$ where 
	    \begin{align*}
	        P(x) = \det\begin{bmatrix}
	        1 & \mathbf{m}^*_1 & \cdots & \mathbf{m}^*_{k} \\
	        \vdots & \vdots & \ddots & \vdots \\
	        \mathbf{m}^*_{k-1} & \mathbf{m}^*_k & \cdots & \mathbf{m}^*_{2k-1} \\
	        1 & x & \cdots & x^k
	        \end{bmatrix}
	    \end{align*}
	    \STATE let $(\widehat{w}_1,\widehat{w}_2,\dots,\widehat{w}_k)^\top$ be the solution of the following system of linear equations
	    \begin{align*}
	        \begin{bmatrix}
	         1 & 1 & \cdots & 1 \\
	         \widehat{\mu}_1 & \widehat{\mu}_2 & \cdots & \widehat{\mu}_k \\
	         \vdots & \vdots & \ddots & \vdots \\
	         \widehat{\mu}_1^{k-1} & \widehat{\mu}_2^{k-1} & \cdots & \widehat{\mu}_k^{k-1}
	        \end{bmatrix}
	        \begin{bmatrix}
	        w_1 \\
	        w_2 \\
	        \vdots \\
	        w_k
	        \end{bmatrix}
	        =
	        \begin{bmatrix}
	        1 \\
	        \mathbf{m}^*_1 \\
	        \vdots \\
	        \mathbf{m}^*_{k-1}
	        \end{bmatrix}
	    \end{align*}
	\end{algorithmic}
	{\bf Output:} estimated weights $\widehat{w}_1,\widehat{w}_2,\dots,\widehat{w}_k$ and estimated means $\widehat{\mu}_1,\widehat{\mu}_2,\cdots,\widehat{\mu}_k$
\end{algorithm}

\section{Conclusion and Discussion}

In this paper, we study the classical problem of learning mixtures of Gaussians with censored data.
The problem becomes more challenging compared to the problem of learning with uncensored data because the data are partially observed.
Our result shows that there is an efficient algorithm to estimate the weights and the means of the Gaussians.
Specifically, we show that one only needs $\frac{1}{\eps^{O(k)}}$ censored samples to estimate the weights and the means within $\eps$ error.
To the best of our knowledge, this is the first finite sample bound for the problem of learning mixtures of Gaussians with censored data even in the simple setting that the Gaussians are univariate and homogeneous.

There are multiple natural extensions to this setting.
For example, a natural extension is to consider mixtures of multivariate Gaussians.
% One can extend our setup to multivariate Gaussians.
Without truncation or censoring, one popular approach to learn mixtures of multivariate Gaussians is to apply random projections and reduce the problem to univariate Gaussians.
This approach relies on the fact that the projection of a mixture of Gaussians is also a mixture of Gaussians.
Unfortunately, this fact is no longer true when the data are truncated or censored.

Another interesting direction is to relax the assumption of known and homogeneous variances to unknown and/or non-homogeneous variances.
When the Gaussians are homogeneous, one can estimate the variance by computing the pairwise distances between $k+1$ samples and find the minimum of them if the samples are not truncated or censored.
It holds from the fact that two samples are from the same Gaussian and hence the expected value of their squared distance is the variance.
It becomes more challenging when the samples are truncated or censored because the expected value of the squared distance may not be the variance.

Furthermore, previous results indicate that, in the uncensored setting, sample bounds can be improved when the centers of Gaussians in the mixture are well-separated \cite{moitra2015super,regev2017learning,qiao2022fourier}. 
An interesting direction for future research would be to improve our results under stronger separation assumptions on the components.
For example, one strategy to exploit separation is to apply the Fourier Transform to the pdf of the mixture. 
% Specifically, we need to integrate over the entire space or require uncensored samples to estimate the Fourier Transform. 
With uncensored samples, it is straightforward to estimate the Fourier Transform, however, when the pdf is truncated, a challenge arises as the Fourier Transform may not yield a convenient form, as required by these analyses. 
We anticipate that delicate modifications may still be needed, and leave this open to future work.

\bibliographystyle{abbrvnat}
\bibliography{ref}

\begin{thebibliography}{39}
\providecommand{\natexlab}[1]{#1}
\providecommand{\url}[1]{\texttt{#1}}
\expandafter\ifx\csname urlstyle\endcsname\relax
  \providecommand{\doi}[1]{doi: #1}\else
  \providecommand{\doi}{doi: \begingroup \urlstyle{rm}\Url}\fi

\bibitem[Amemiya(1973)]{amemiya1973regression}
T.~Amemiya.
\newblock Regression analysis when the dependent variable is truncated normal.
\newblock \emph{Econometrica: Journal of the Econometric Society}, pages
  997--1016, 1973.

\bibitem[Balakrishnan and Cramer(2014)]{balakrishnan2014art}
N.~Balakrishnan and E.~Cramer.
\newblock The art of progressive censoring.
\newblock \emph{Statistics for industry and technology}, 2014.

\bibitem[Bernoulli(1760)]{bernoulli1760essai}
D.~Bernoulli.
\newblock Essai d'une nouvelle analyse de la mortalit{\'e} caus{\'e}e par la
  petite v{\'e}role, et des avantages de l'inoculation pour la pr{\'e}venir.
\newblock \emph{Histoire de l'Acad., Roy. Sci.(Paris) avec Mem}, pages 1--45,
  1760.

\bibitem[Cohen(2016)]{cohen2016truncated}
A.~C. Cohen.
\newblock \emph{Truncated and Censored Samples: Theory and Applications}.
\newblock CRC Press, 2016.

\bibitem[Dasgupta(1999)]{dasgupta1999learning}
S.~Dasgupta.
\newblock Learning mixtures of gaussians.
\newblock In \emph{40th Annual Symposium on Foundations of Computer Science
  (Cat. No. 99CB37039)}, pages 634--644. IEEE, 1999.

\bibitem[Daskalakis et~al.(2017)Daskalakis, Tzamos, and
  Zampetakis]{daskalakis2017ten}
C.~Daskalakis, C.~Tzamos, and M.~Zampetakis.
\newblock Ten steps of em suffice for mixtures of two gaussians.
\newblock In \emph{Conference on Learning Theory}, pages 704--710. PMLR, 2017.

\bibitem[Daskalakis et~al.(2018)Daskalakis, Gouleakis, Tzamos, and
  Zampetakis]{daskalakis2018efficient}
C.~Daskalakis, T.~Gouleakis, C.~Tzamos, and M.~Zampetakis.
\newblock Efficient statistics, in high dimensions, from truncated samples.
\newblock In \emph{2018 IEEE 59th Annual Symposium on Foundations of Computer
  Science (FOCS)}, pages 639--649. IEEE, 2018.

\bibitem[Daskalakis et~al.(2019)Daskalakis, Gouleakis, Tzamos, and
  Zampetakis]{daskalakis2019computationally}
C.~Daskalakis, T.~Gouleakis, C.~Tzamos, and M.~Zampetakis.
\newblock Computationally and statistically efficient truncated regression.
\newblock In \emph{Conference on Learning Theory}, pages 955--960. PMLR, 2019.

\bibitem[Diakonikolas and Kane(2019)]{diakonikolas2019recent}
I.~Diakonikolas and D.~M. Kane.
\newblock Recent advances in algorithmic high-dimensional robust statistics.
\newblock \emph{arXiv preprint arXiv:1911.05911}, 2019.

\bibitem[Diakonikolas et~al.(2017)Diakonikolas, Kamath, Kane, Li, Moitra, and
  Stewart]{diakonikolas2017being}
I.~Diakonikolas, G.~Kamath, D.~M. Kane, J.~Li, A.~Moitra, and A.~Stewart.
\newblock Being robust (in high dimensions) can be practical.
\newblock In \emph{International Conference on Machine Learning}, pages
  999--1008. PMLR, 2017.

\bibitem[Diakonikolas et~al.(2018)Diakonikolas, Kamath, Kane, Li, Moitra, and
  Stewart]{diakonikolas2018robustly}
I.~Diakonikolas, G.~Kamath, D.~M. Kane, J.~Li, A.~Moitra, and A.~Stewart.
\newblock Robustly learning a gaussian: Getting optimal error, efficiently.
\newblock In \emph{Proceedings of the Twenty-Ninth Annual ACM-SIAM Symposium on
  Discrete Algorithms}, pages 2683--2702. SIAM, 2018.

\bibitem[Diakonikolas et~al.(2019)Diakonikolas, Kamath, Kane, Li, Moitra, and
  Stewart]{diakonikolas2019robust}
I.~Diakonikolas, G.~Kamath, D.~Kane, J.~Li, A.~Moitra, and A.~Stewart.
\newblock Robust estimators in high-dimensions without the computational
  intractability.
\newblock \emph{SIAM Journal on Computing}, 48\penalty0 (2):\penalty0 742--864,
  2019.

\bibitem[Doss et~al.(2020)Doss, Wu, Yang, and Zhou]{doss2020optimal}
N.~Doss, Y.~Wu, P.~Yang, and H.~H. Zhou.
\newblock Optimal estimation of high-dimensional gaussian mixtures.
\newblock \emph{arXiv preprint arXiv:2002.05818}, 2020.

\bibitem[Fisher(1931)]{fisher1931properties}
R.~Fisher.
\newblock Properties and applications of hh functions.
\newblock \emph{Mathematical tables}, 1:\penalty0 815--852, 1931.

\bibitem[Galton(1898)]{galton1898examination}
F.~Galton.
\newblock An examination into the registered speeds of american trotting
  horses, with remarks on their value as hereditary data.
\newblock \emph{Proceedings of the Royal Society of London}, 62\penalty0
  (379-387):\penalty0 310--315, 1898.

\bibitem[Hardt and Price(2015)]{hardt2015tight}
M.~Hardt and E.~Price.
\newblock Tight bounds for learning a mixture of two gaussians.
\newblock In \emph{Proceedings of the forty-seventh annual ACM symposium on
  Theory of computing}, pages 753--760, 2015.

\bibitem[Hausman and Wise(1977)]{hausman1977social}
J.~A. Hausman and D.~A. Wise.
\newblock Social experimentation, truncated distributions, and efficient
  estimation.
\newblock \emph{Econometrica: Journal of the Econometric Society}, pages
  919--938, 1977.

\bibitem[Hopkins et~al.(2022)Hopkins, Kamath, and Majid]{hopkins2022efficient}
S.~B. Hopkins, G.~Kamath, and M.~Majid.
\newblock Efficient mean estimation with pure differential privacy via a
  sum-of-squares exponential mechanism.
\newblock In \emph{Proceedings of the 54th Annual ACM SIGACT Symposium on
  Theory of Computing}, pages 1406--1417, 2022.

\bibitem[Kalai et~al.(2010)Kalai, Moitra, and Valiant]{kalai2010efficiently}
A.~T. Kalai, A.~Moitra, and G.~Valiant.
\newblock Efficiently learning mixtures of two gaussians.
\newblock In \emph{Proceedings of the forty-second ACM symposium on Theory of
  computing}, pages 553--562, 2010.

\bibitem[Lai et~al.(2016)Lai, Rao, and Vempala]{lai2016agnostic}
K.~A. Lai, A.~B. Rao, and S.~Vempala.
\newblock Agnostic estimation of mean and covariance.
\newblock In \emph{2016 IEEE 57th Annual Symposium on Foundations of Computer
  Science (FOCS)}, pages 665--674. IEEE, 2016.

\bibitem[Lee(1914)]{lee1914table}
A.~Lee.
\newblock Table of the gaussian" tail" functions; when the" tail" is larger
  than the body.
\newblock \emph{Biometrika}, 10\penalty0 (2/3):\penalty0 208--214, 1914.

\bibitem[Lee and Scott(2012)]{lee2012algorithms}
G.~Lee and C.~Scott.
\newblock Em algorithms for multivariate gaussian mixture models with truncated
  and censored data.
\newblock \emph{Computational Statistics \& Data Analysis}, 56\penalty0
  (9):\penalty0 2816--2829, 2012.

\bibitem[Lindsay(1995)]{lindsay1995}
B.~G. Lindsay.
\newblock Mixture models: theory, geometry and applications.
\newblock In \emph{NSF-CBMS regional conference series in probability and
  statistics}, pages i--163. JSTOR, 1995.

\bibitem[Liu et~al.(2021)Liu, Kong, Kakade, and Oh]{liu2021robust}
X.~Liu, W.~Kong, S.~Kakade, and S.~Oh.
\newblock Robust and differentially private mean estimation.
\newblock \emph{Advances in neural information processing systems},
  34:\penalty0 3887--3901, 2021.

\bibitem[Maddala(1986)]{maddala1986limited}
G.~S. Maddala.
\newblock \emph{Limited-dependent and qualitative variables in econometrics}.
\newblock Number~3. Cambridge university press, 1986.

\bibitem[McLachlan and Jones(1988)]{mclachlan1988fitting}
G.~McLachlan and P.~Jones.
\newblock Fitting mixture models to grouped and truncated data via the em
  algorithm.
\newblock \emph{Biometrics}, pages 571--578, 1988.

\bibitem[Moitra(2015)]{moitra2015super}
A.~Moitra.
\newblock Super-resolution, extremal functions and the condition number of
  vandermonde matrices.
\newblock In \emph{Proceedings of the forty-seventh annual ACM symposium on
  Theory of computing}, pages 821--830, 2015.

\bibitem[Moitra and Valiant(2010)]{moitra2010settling}
A.~Moitra and G.~Valiant.
\newblock Settling the polynomial learnability of mixtures of gaussians.
\newblock In \emph{2010 IEEE 51st Annual Symposium on Foundations of Computer
  Science}, pages 93--102. IEEE, 2010.

\bibitem[Nagarajan and Panageas(2020)]{nagarajan2020analysis}
S.~G. Nagarajan and I.~Panageas.
\newblock On the analysis of em for truncated mixtures of two gaussians.
\newblock In \emph{Algorithmic Learning Theory}, pages 634--659. PMLR, 2020.

\bibitem[Pearson(1894)]{pearson1894contributions}
K.~Pearson.
\newblock Contributions to the mathematical theory of evolution.
\newblock \emph{Philosophical Transactions of the Royal Society of London. A},
  185:\penalty0 71--110, 1894.

\bibitem[Pearson(1902)]{pearson1902systematic}
K.~Pearson.
\newblock On the systematic fitting of curves to observations and measurements.
\newblock \emph{Biometrika}, 1\penalty0 (3):\penalty0 265--303, 1902.

\bibitem[Pearson and Lee(1908)]{pearson1908generalised}
K.~Pearson and A.~Lee.
\newblock On the generalised probable error in multiple normal correlation.
\newblock \emph{Biometrika}, 6\penalty0 (1):\penalty0 59--68, 1908.

\bibitem[Qiao et~al.(2022)Qiao, Guruganesh, Rawat, Dubey, and
  Zaheer]{qiao2022fourier}
M.~Qiao, G.~Guruganesh, A.~Rawat, K.~A. Dubey, and M.~Zaheer.
\newblock A fourier approach to mixture learning.
\newblock \emph{Advances in Neural Information Processing Systems},
  35:\penalty0 20850--20861, 2022.

\bibitem[Regev and Vijayaraghavan(2017)]{regev2017learning}
O.~Regev and A.~Vijayaraghavan.
\newblock On learning mixtures of well-separated gaussians.
\newblock In \emph{2017 IEEE 58th Annual Symposium on Foundations of Computer
  Science (FOCS)}, pages 85--96. IEEE, 2017.

\bibitem[Schneider(1986)]{schneider1986truncated}
H.~Schneider.
\newblock \emph{Truncated and censored samples from normal populations}.
\newblock Marcel Dekker, Inc., 1986.

\bibitem[Tobin(1958)]{tobin1958estimation}
J.~Tobin.
\newblock Estimation of relationships for limited dependent variables.
\newblock \emph{Econometrica: journal of the Econometric Society}, pages
  24--36, 1958.

\bibitem[Vempala and Wang(2004)]{vempala2004spectral}
S.~Vempala and G.~Wang.
\newblock A spectral algorithm for learning mixture models.
\newblock \emph{Journal of Computer and System Sciences}, 68\penalty0
  (4):\penalty0 841--860, 2004.

\bibitem[Wu and Yang(2018)]{wu2018optimal}
Y.~Wu and P.~Yang.
\newblock Optimal estimation of gaussian mixtures via denoised method of
  moments.
\newblock \emph{arXiv preprint arXiv:1807.07237}, 2018.

\bibitem[Xu et~al.(2016)Xu, Hsu, and Maleki]{xu2016global}
J.~Xu, D.~J. Hsu, and A.~Maleki.
\newblock Global analysis of expectation maximization for mixtures of two
  gaussians.
\newblock \emph{Advances in Neural Information Processing Systems}, 29, 2016.

\end{thebibliography}

\appendix

\section{Proof}

In this section, we will present the proofs of the lemmas.

\begin{lemma}\label{lem:det_v}

Let $V$ be the matrix defined in \eqref{eq:v_def}, i.e. $V$ is the $\ell$-by-$\ell$ matrix whose $(r,c)$-entry is $J_{h_{c},r}$ for $r,c=0,1,\dots,\ell-1$.
Recall that, from \eqref{eq:j_def}, $J_{h_{c},r}$ is defined as
\begin{align*}
    J_{h_c,r} = \int_{S} \frac{1}{\sqrt{2\pi}r!}e^{-\frac{1}{2}x^2}h_c(x)h_r(x) \dir x.
\end{align*}
Then, the determinant of $V$ is 
\begin{align*}
    \det(V)
    =
    \left(\frac{1}{\sqrt{2\pi}}\right)^\ell\cdot \prod_{r=0}^{\ell-1}\frac{1}{r!} \cdot\int_{x_0>\cdots > x_{\ell-1},\mathbf{x}\in S^{\ell}} e^{-\frac{1}{2}\sum_{c=0}^{\ell-1}x_c^2}\cdot\prod_{0\leq c_1<c_2 \leq \ell-1} (x_{c_1}-x_{c_2})^2 \dir \mathbf{x}.
\end{align*}

\end{lemma}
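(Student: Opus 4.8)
The plan is to apply the Cauchy–Binet formula (Lemma~\ref{lem:cb_formula}) after splitting the Gaussian weight symmetrically between the two factors. Writing the $(r,c)$-entry $J_{h_c,r}$ in the form $\int_{S} f_r(x) g_c(x)\,\dir x$, I would take
$$
f_r(x) = \frac{1}{\sqrt{2\pi}\,r!}\,e^{-\frac{1}{4}x^2}h_r(x), \qquad g_c(x) = e^{-\frac{1}{4}x^2}h_c(x),
$$
so that $f_r(x)g_c(x)$ reproduces $\frac{1}{\sqrt{2\pi}\,r!}e^{-\frac{1}{2}x^2}h_r(x)h_c(x)$ exactly. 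Cauchy–Binet then expresses $\det(V)$ as an integral over the ordered region $x_0>\cdots>x_{\ell-1}$ of $\det(B(\mathbf{x}))\cdot\det(C(\mathbf{x}))$, where $B(\mathbf{x})$ has $(r,i)$-entry $f_r(x_i)$ and $C(\mathbf{x})$ has $(i,c)$-entry $g_c(x_i)$.

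The next step is to pull the separable scalar factors out of each determinant. From $B(\mathbf{x})$ I would factor $\frac{1}{\sqrt{2\pi}\,r!}$ out of each row $r$ and $e^{-\frac{1}{4}x_i^2}$ out of each column $i$; from $C(\mathbf{x})$ I would factor $e^{-\frac{1}{4}x_i^2}$ out of each row $i$. This leaves behind, in both cases, the determinant of the matrix $[h_j(x_i)]$ (the $B$- and $C$-leftovers are transposes of one another and hence have equal determinant), while the exponential factors combine to $e^{-\frac{1}{2}\sum_i x_i^2}$ and the scalar factors combine to $\left(\frac{1}{\sqrt{2\pi}}\right)^\ell\prod_{r=0}^{\ell-1}\frac{1}{r!}$.

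It then remains to evaluate $D(\mathbf{x}) := \det[h_j(x_i)]_{i,j=0}^{\ell-1}$. Here I would use that the probabilist's Hermite polynomial $h_j$ is monic of degree $j$ (the leading term in the explicit formula is exactly $x^j$), so column operations that subtract suitable combinations of lower-degree columns convert $h_j$ into the monomial $x^j$ without changing the determinant; hence $D(\mathbf{x})$ equals the Vandermonde determinant $\prod_{i<i'}(x_{i'}-x_i)$ up to sign. Since $D(\mathbf{x})$ appears squared in the integrand, the sign is irrelevant and $D(\mathbf{x})^2 = \prod_{0\le c_1<c_2\le\ell-1}(x_{c_1}-x_{c_2})^2$. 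Combining all the factors yields exactly the claimed formula.

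I expect no serious conceptual obstacle; the argument is a direct computation. The one point requiring care is the symmetric splitting of the weight $e^{-\frac{1}{2}x^2}$ into $e^{-\frac{1}{4}x^2}\cdot e^{-\frac{1}{4}x^2}$, which is what forces both resulting determinants to equal the same $D(\mathbf{x})$ and thereby produces the square $\prod(x_{c_1}-x_{c_2})^2$ rather than a single Vandermonde factor. The other place to be attentive is the bookkeeping of the row-versus-column factorizations, so that the $\frac{1}{r!}$ emerges as a product over the rows and the Gaussian weight as a product over the evaluation points $x_i$.
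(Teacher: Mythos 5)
Your proof is correct and follows essentially the same route as the paper's: a symmetric split of the weight $e^{-\frac{1}{2}x^2}$ into $e^{-\frac{1}{4}x^2}\cdot e^{-\frac{1}{4}x^2}$, an application of Cauchy--Binet, factoring out the scalar and Gaussian factors, and reducing $\det[h_j(x_i)]$ to the Vandermonde determinant via monicity of the Hermite polynomials. The only (immaterial) difference is ordering: the paper first pulls the $\frac{1}{\sqrt{2\pi}\,r!}$ factors out of $V$ to form an intermediate matrix $W$ and then applies Cauchy--Binet, whereas you apply Cauchy--Binet directly to $V$ and extract those factors afterward.
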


\begin{proof}

Since the $(r,c)$-entry of $V$ is 
\begin{align*}
    J_{h_c,r}=\int_{S}  \frac{1}{\sqrt{2\pi}r!}e^{-\frac{1}{2}x^2}h_c(x)h_r(x) \dir x,
\end{align*}
by factoring out the term $\frac{1}{\sqrt{2\pi}r!}$ for each row, we have 
\begin{align*}
    \det(V)
    =
    \left(\frac{1}{\sqrt{2\pi}}\right)^\ell \cdot \prod_{r=0}^{\ell-1}\frac{1}{r!}\cdot \det(W) \numberthis\label{eq:v_w_connection}
\end{align*}
where $W$ is the $\ell$-by-$\ell$ matrix whose $(r,c)$-entry is 
\begin{align*}
    W_{r,c}=\int_{S} e^{-\frac{1}{2}x^2}h_c(x)h_r(x) \dir x.\numberthis\label{eq:w_def}
\end{align*}
By Cauchy-Binet formula, we can further express $\det(W)$ as 
\begin{align*}
    \det(W)
    =
    \int_{x_0>\cdots > x_{\ell-1},\mathbf{x}\in S^{\ell}} (\det(U(\mathbf{x})))^2 \dir \mathbf{x} \numberthis\label{eq:w_u_connection}
\end{align*}
where $U(\mathbf{x})$ is the $\ell$-by-$\ell$ matrix whose $(r,c)$-entry is 
\begin{align*}
    U(\mathbf{x})_{r,c} = e^{-\frac{1}{4}x_c^2}h_r(x_c)\numberthis\label{eq:u_def}
\end{align*}
for any $\mathbf{x}=(x_0,\dots, x_{\ell-1})\in S^\ell$.
By factoring out the term $e^{-\frac{1}{4}x_c^2}$ for each column, we have
\begin{align*}
    \det(U(\mathbf{x})) 
    =
    e^{-\frac{1}{4}\sum_{c=0}^{\ell-1}x_c^2}\det(P(\mathbf{x})) \numberthis\label{eq:u_p_connection}
\end{align*}
where $P(\mathbf{x})$ is the $\ell$-by-$\ell$ matrix whose $(r,c)$-entry is 
\begin{align*}
    P(\mathbf{x})_{r,c} = h_r(x_c) \numberthis\label{eq:p_def}
\end{align*}
for any $\mathbf{x}=(x_0,\dots, x_{\ell-1})\in S^\ell$.
Since $h_r$ is a polynomial of degree $r$ with the leading coefficient $1$, by applying row and column operations, the determinant $\det(P(\mathbf{x}))$ is same as the determinant of the Vandermonde matrix, i.e.
\begin{align*}
    \det(P(\mathbf{x})) = \prod_{0\leq c_1<c_2 \leq \ell-1} (x_{c_1}-x_{c_2}). \numberthis\label{eq:det_p}
\end{align*}
In other words, the determinant $\det(V)$ is 
\begin{align*}
    \det(V)
    =
    \left(\frac{1}{\sqrt{2\pi}}\right)^\ell\cdot \prod_{r=0}^{\ell-1}\frac{1}{r!} \cdot\int_{x_0>\cdots > x_{\ell-1},\mathbf{x}\in S^{\ell}} e^{-\frac{1}{2}\sum_{c=0}^{\ell-1}x_c^2}\cdot\prod_{0\leq c_1<c_2 \leq \ell-1} (x_{c_1}-x_{c_2})^2 \dir \mathbf{x}.
\end{align*}

\end{proof}

\begin{lemma}\label{lem:det_v_i_j}

Let $V^{(i\rightarrow j)}$ be the matrix defined in \eqref{eq:v_i_j_def} for $i\leq 2k-1$ and $j\geq \ell \geq 2(2k-1) \geq 2i$.
Then the absolute value of the  determinant of $V^{(i\rightarrow j)}$ is 
\begin{align*}
    \abs{\det(V^{(i\rightarrow j)})} \leq \frac{1}{2^{\Omega(j\log j)}} \cdot \abs{\det(V)}.
\end{align*}

\end{lemma}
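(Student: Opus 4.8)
The plan is to mirror the computation of $\det(V)$ in Lemma \ref{lem:det_v}, tracking how replacing the $i$-th row of $V$ by $v^{(j)}$ changes each step. Since the $(r,c)$-entry $J_{h_c,r}$ carries the factor $\frac{1}{\sqrt{2\pi}\,r!}$ that depends only on the row index $r$, and the $i$-th row of $V^{(i\rightarrow j)}$ now consists of the entries $J_{h_c,j}$, factoring out the row constants produces $(\frac{1}{\sqrt{2\pi}})^\ell\cdot\frac{1}{j!}\prod_{r\neq i}\frac{1}{r!}=(\frac{1}{\sqrt{2\pi}})^\ell\cdot\frac{i!}{j!}\prod_{r=0}^{\ell-1}\frac{1}{r!}$ times the determinant of the matrix whose $(r,c)$-entry is $\int_S e^{-x^2/2}h_c(x)h_r(x)\dir x$ for $r\neq i$ and $\int_S e^{-x^2/2}h_c(x)h_j(x)\dir x$ for $r=i$. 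Applying the Cauchy--Binet formula (Lemma \ref{lem:cb_formula}) and pulling out $e^{-x_c^2/4}$ from each column exactly as in Lemma \ref{lem:det_v}, I would reduce to the integral over the ordered simplex of $e^{-\frac12\sum_c x_c^2}$ times $\det(P(\mathbf{x}))\cdot\det(\widetilde P(\mathbf{x}))$, where $P(\mathbf{x})_{r,c}=h_r(x_c)$ as before and $\widetilde P(\mathbf{x})$ is the same except its $i$-th row is $h_j(x_c)$.

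The next step is to evaluate $\det(\widetilde P(\mathbf{x}))$. Because each $h_r$ is monic of degree $r$, row and column operations turn $\det(\widetilde P(\mathbf{x}))$ into the generalized Vandermonde determinant with exponent set $\{0,1,\dots,\ell-1\}\setminus\{i\}\cup\{j\}$; since $j\geq \ell>i$ these $\ell$ exponents are distinct. By the definition of Schur polynomials this determinant equals $s_\lambda(\mathbf{x})\cdot\prod_{c_1<c_2}(x_{c_1}-x_{c_2})$ with the partition $\lambda=(j-\ell+1,\underbrace{1,\dots,1}_{\ell-1-i},\underbrace{0,\dots,0}_{i})$, so that $\abs{\lambda}=j-i$. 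Combining with $\det(P(\mathbf{x}))=\prod_{c_1<c_2}(x_{c_1}-x_{c_2})$ from \eqref{eq:det_p} and comparing against Lemma \ref{lem:det_v}, this yields the exact identity
\begin{align*}
    \det(V^{(i\rightarrow j)})
    =
    \frac{i!}{j!}\cdot\det(V)\cdot
    \frac{\int_{x_0>\cdots>x_{\ell-1},\mathbf{x}\in S^\ell} e^{-\frac12\sum_c x_c^2}\, s_\lambda(\mathbf{x})\prod_{c_1<c_2}(x_{c_1}-x_{c_2})^2\dir\mathbf{x}}{\int_{x_0>\cdots>x_{\ell-1},\mathbf{x}\in S^\ell} e^{-\frac12\sum_c x_c^2}\prod_{c_1<c_2}(x_{c_1}-x_{c_2})^2\dir\mathbf{x}}.
\end{align*}

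Since the denominator integrand is nonnegative, I would take absolute values and use $\abs{s_\lambda(\mathbf{x})}\le N_\lambda R^{\abs{\lambda}}$ on $S^\ell=[-R,R]^\ell$, where $N_\lambda$ is the number of semi-standard Young tableaux of shape $\lambda$ with entries in $\{1,\dots,\ell\}$ (via the expansion $s_\lambda=\sum_Y\mathbf{x}^Y$). The integral ratio is then at most $N_\lambda R^{j-i}$, so
\begin{align*}
    \abs{J_{f_i,j}}=\frac{\abs{\det(V^{(i\rightarrow j)})}}{\abs{\det(V)}}\le \frac{i!}{j!}\,N_\lambda\, R^{\,j-i}.
\end{align*}
To count $N_\lambda$, I would use that $\lambda$ is a hook: a first column of length $\ell-i$ that is strictly increasing, contributing at most $\binom{\ell}{\ell-i}\le 2^\ell$ choices, and a first row whose remaining $j-\ell$ cells are weakly increasing in $\{1,\dots,\ell\}$, contributing at most $\binom{j}{\ell-1}\le(ej/\ell)^{\ell}$ choices, so $N_\lambda\le 2^\ell (ej/\ell)^\ell$.

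The final and most delicate step is to verify $\frac{i!}{j!}N_\lambda R^{j-i}\le 2^{-\Omega(j\log j)}$ for all $j\ge \ell\ge 2(2k-1)\ge 2i$. Taking logarithms and applying Stirling, $\log(j!/i!)=\Omega(j\log j)$ since $i$ is bounded, whereas $\log(N_\lambda R^{j-i})=O(j)+\ell\log(j/\ell)$; writing $\ell\log(j/\ell)=\ell\log j-\ell\log\ell$, the decay exponent becomes $(j-\ell)\log j+\ell\log\ell-O(j)$. The main obstacle is that this must remain $\Omega(j\log j)$ uniformly even when $j$ is comparable to $\ell$, a regime where a cruder tableau count such as $N_\lambda\le\ell^{\abs{\lambda}}$ would fail, because it discards the crucial $-\ell\log\ell$ saving that comes from counting the long first row as a multiset rather than as a free word. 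I would close with a two-case split: if $\ell\le j/2$ then $(j-\ell)\log j\ge\frac12 j\log j$, while if $\ell>j/2$ then $\ell\log\ell\ge\frac{j}{2}\log(j/2)$, so in either case the decay exponent is $\Omega(j\log j)$, completing the bound.
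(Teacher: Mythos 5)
Your setup follows the paper exactly up to the reduction to $\det(\widetilde P(\mathbf{x}))$ (the paper's $P^{(i\rightarrow j)}(\mathbf{x})$), and bypassing Cauchy--Schwarz with a sup-norm bound on the ratio of integrands would be fine \emph{if} your next claim were true. But the claim that ``row and column operations turn $\det(\widetilde P(\mathbf{x}))$ into the generalized Vandermonde determinant with exponent set $\{0,1,\dots,\ell-1\}\setminus\{i\}\cup\{j\}$'' is false, and this is where the proof breaks. Row operations can only subtract from the row $h_j(x_c)$ linear combinations of the other rows, which are polynomials of degree at most $\ell-1$. However, $h_j$ contains the monomials $x^b$ for every $b\equiv j \pmod 2$ with $\ell\le b\le j-2$ (and also $x^i$ when $i\equiv j\pmod 2$, since that row was removed), and none of these can be eliminated. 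Their Hermite coefficients have magnitude $\frac{j!}{((j-b)/2)!\,b!\,2^{(j-b)/2}}$, which is enormous, so $\det(\widetilde P(\mathbf{x}))$ is not a single Schur term but a sum of $O(j^2)$ generalized Vandermonde terms; this is exactly what the paper's Lemma \ref{lem:det_p_i_j} computes via the factorization $P^{(i\rightarrow j)}(\mathbf{x})=C^{(i\rightarrow j)}\cdot X^{[j+1]}$ and Cauchy--Binet. Your ``exact identity'' keeps only the one term with exponent set $[\ell]\setminus\{i\}\cup\{j\}$, whose coefficient minor happens to equal $1$, and silently discards the dominant ones: the exponent set $[\ell]$, for example, enters with a coefficient minor of magnitude up to $\frac{j!}{i!\,((j-i)/2)!}$, so after multiplying by the prefactor $\frac{i!}{j!}$ it alone contributes on the order of $\frac{1}{((j-i)/2)!}=2^{-\Theta(\frac{j-i}{2}\log j)}$ to $\abs{J_{f_i,j}}$, which dwarfs your claimed bound $\frac{i!}{j!}N_\lambda R^{j-i}\le 2^{-j\log j+O(j)+\ell\log(j/\ell)}$. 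So the inequality you assert is not merely unproven but false in general. A one-row caricature of the mechanism: $\det[\,h_2(x_0)\,]=x_0^2-1\neq x_0^2$, and on $[-R,R]$ with small $R$ it is the constant $-1$, not $x_0^2$, that dominates the integral.

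The lemma itself survives, because $\frac{2^{O(j)}}{((j-i)/2)!}$ is still $2^{-\Omega(j\log j)}$ under the hypothesis $j\ge \ell\ge 2(2k-1)\ge 2i$; but proving it requires bounding \emph{all} the cross terms: one must control each coefficient minor $\det(C^{(i\rightarrow j)}_{:,T})$ using the parity structure of the Hermite coefficients, bound each accompanying Schur factor, and sum over the $O(j^2)$ admissible exponent sets $T$. That is precisely the content of the paper's Lemma \ref{lem:det_p_i_j}, which your proposal in effect replaces by its single easiest term. Relatedly, your concern about needing the sharp hook-shape tableau count (the ``$-\ell\log\ell$ saving'') is an artifact of the wrong route: once the true source of decay is recognized as the factor $\frac{1}{((j-i)/2)!}$ coming from the Hermite coefficients, rather than $\frac{1}{j!}$ from the row prefactor, the paper's crude $2^{O(j)}$ count of semi-standard Young tableaux suffices.
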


\begin{proof}

We can perform a similar computation as in the computation of $\det(V)$.
Namely, we factor out the term $\frac{1}{\sqrt{2\pi}r!}$ for each row, we have
\begin{align*}
    \abs{\det(V^{(i\rightarrow j)})}
    =
    \left(\frac{1}{\sqrt{2\pi}}\right)^\ell \cdot \prod_{r=0,r\neq i}^{\ell-1}\frac{1}{r!}\cdot \frac{1}{j!}\cdot\abs{\det(W^{(i\rightarrow j)})}
\end{align*}
where $W^{(i\rightarrow j)}$ is the same matrix as $W$ from \eqref{eq:w_def} except that the $i$-th row is replaced by the row $\sqrt{2\pi}j!v^{(j)}$.
By comparing to \eqref{eq:v_w_connection}, we simplify $\abs{\det(V^{(i\rightarrow j)})}$ to be
\begin{align*}
    \abs{\det(V^{(i\rightarrow j)})}
    =
    \frac{i!}{j!}\cdot \frac{\abs{\det(W^{(i\rightarrow j)})}}{\abs{\det(W)}}\cdot \abs{\det(V)} \numberthis\label{eq:det_v_eq}
\end{align*}

By Cauchy-Binet formula, we can further express $\det(W^{(i\rightarrow j)})$ as
\begin{align*}
    \det(W^{(i\rightarrow j)})
    =
    \int_{x_0>\cdots>x_{\ell-1},\mathbf{x}\in S^{\ell}} \det(U(\mathbf{x}))\det(U^{(i\rightarrow j)}(\mathbf{x})) \dir \mathbf{x}
\end{align*}
where $U^{(i\rightarrow j)}(\mathbf{x})$ is the same matrix as $U(\mathbf{x})$ from \eqref{eq:u_def} except that the $i$-th row is replaced with the column whose $c$-th entry is $e^{-\frac{1}{4}x_c^2}h_j(x_c)$ for any $\mathbf{x}=(x_0,\dots,x_{\ell-1})\in\mathbb{R}^\ell$.
Furthermore, by Cauchy–Schwarz inequality and comparing to \eqref{eq:w_u_connection},
\begin{align*}
    \abs{\det(W^{(i\rightarrow j)})}
    & \leq
    \left(\int_{x_0>\cdots>x_{\ell-1},\mathbf{x}\in S^{\ell}} (\det(U(\mathbf{x})))^2 \dir \mathbf{x}\right)^{1/2}\left(\int_{x_0>\cdots>x_{\ell-1},\mathbf{x}\in S^{\ell}} (\det(U^{(i\rightarrow j)}(\mathbf{x})))^2 \dir \mathbf{x}\right)^{1/2} \\
    & =
    \left(\frac{\int_{x_0>\cdots>x_{\ell-1},\mathbf{x}\in S^{\ell}} (\det(U^{(i\rightarrow j)}(\mathbf{x})))^2 \dir \mathbf{x}}{\int_{x_0>\cdots>x_{\ell-1},\mathbf{x}\in S^{\ell}} (\det(U(\mathbf{x})))^2 \dir \mathbf{x}}\right)^{1/2}\abs{\det(W)}. \numberthis \label{eq:det_w_ineq}
\end{align*}

By factoring out the term $e^{-\frac{1}{4}x_c^2}$ for each column, we have
\begin{align*}
    \det(U^{(i\rightarrow j)}(\mathbf{x}))
    & =
    e^{-\frac{1}{4}\sum_{c=0}^{\ell-1}x_c^2}\det(P^{(i\rightarrow j)}(\mathbf{x})) \numberthis\label{eq:det_u_eq}
\end{align*}
where $P^{(i\rightarrow j)}(\mathbf{x})$ is the same matrix as $P(\mathbf{x})$ from \eqref{eq:p_def} except that the $i$-th row is replaced with the row whose $c$-th entry is $h_j(x_c)$ for any $\mathbf{x}=(x_0,\dots,x_{\ell-1})\in\mathbb{R}^\ell$.

This time, the computation of $\det(P^{(i\rightarrow j)}(\mathbf{x}))$ is not as easy as $\det(P(\mathbf{x}))$.
In Lemma \ref{lem:det_p_i_j} below, we will show that 
\begin{align*}
    \abs{\det(P^{(i\rightarrow j)}(\mathbf{x}))}
    \leq
    \frac{j!}{i!(\frac{j-i}{2})!}\cdot 2^{O(j)}\cdot \abs{\det(P(\mathbf{x}))}.
\end{align*}

Plugging it into \eqref{eq:det_u_eq} and comparing \eqref{eq:det_u_eq} to \eqref{eq:u_p_connection}, we have
\begin{align*}
    \abs{\det(U^{(i\rightarrow j)}(\mathbf{x}))}
    \leq
    \frac{j!}{i!(\frac{j-i}{2})!}\cdot 2^{O(j)} \cdot \abs{\det(U(\mathbf{x}))}.
\end{align*}
Furthermore, by plugging it into \eqref{eq:det_w_ineq},
\begin{align*}
    \abs{\det(W^{(i\rightarrow j)})}
    &\leq
    \left(\frac{\int_{x_0>\cdots>x_{\ell-1},\mathbf{x}\in S^{\ell}} (\det(U^{(i\rightarrow j)}(\mathbf{x})))^2 \dir \mathbf{x}}{\int_{x_0>\cdots>x_{\ell-1},\mathbf{x}\in S^{\ell}} (\det(U(\mathbf{x})))^2 \dir \mathbf{x}}\right)^{1/2}\abs{\det(W)} 
    \leq
    \frac{j!}{i!(\frac{j-i}{2})!}\cdot2^{O(j)}\cdot\abs{\det(W)}
\end{align*}
Finally, when we plug it into \eqref{eq:det_v_eq}, we prove that
\begin{align*}
    \abs{\det(V^{(i\rightarrow j)})}
    & =
    \frac{i!}{j!}\cdot \frac{\abs{\det(W^{(i\rightarrow j)})}}{\abs{\det(W)}}\cdot \abs{\det(V)} 
     \leq
    \frac{i!}{j!}\cdot \frac{j!2^{O(j)}}{i!(\frac{j-i}{2})!} \cdot \abs{\det(V)} 
     =
    \frac{2^{O(j)}}{(\frac{j-i}{2})!} \cdot \abs{\det(V)}
\end{align*}
Recall that $i\leq 2k-1$ and the assumption of $j\geq \ell > 2(2k-1) \geq 2i$.
We have
\begin{align*}
    \abs{\det(V^{(i\rightarrow j)})} \leq \frac{1}{2^{\Omega(j\log j)}} \cdot \abs{\det(V)}.
\end{align*}

\end{proof}

\begin{lemma}\label{lem:det_v_i_a}

Let $V^{(\mathbf{e}_i\leftarrow a)}$ be the matrix defined in \eqref{eq:v_i_a_def} for $i\leq 2k-1$ and $a\leq \ell$.
Then the absolute value of the  determinant of $V^{(\mathbf{e}_i\leftarrow a)}$ is 
\begin{align*}
    \abs{\det(V^{(\mathbf{e}_i\leftarrow a)})}
    \leq
    2^{O(\ell\log \ell)} \cdot \abs{\det(V)}.
\end{align*}

\end{lemma}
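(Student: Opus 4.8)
The plan is to reduce the determinant of $V^{(\mathbf{e}_i\leftarrow a)}$ to a minor of $V$ and then to imitate the Cauchy--Binet computation of Lemma~\ref{lem:det_v}; the only genuinely new difficulty will be that the relevant integral now involves one fewer variable than the one computing $\det(V)$. First I would expand $\det(V^{(\mathbf{e}_i\leftarrow a)})$ along its $a$-th column, which equals the canonical vector $\mathbf{e}_i$. This gives $\abs{\det(V^{(\mathbf{e}_i\leftarrow a)})} = \abs{M_{i,a}}$, where $M_{i,a}$ is the $(\ell-1)$-by-$(\ell-1)$ minor of $V$ obtained by deleting row $i$ and column $a$. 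Factoring $\frac{1}{\sqrt{2\pi}\,r!}$ out of each surviving row exactly as in \eqref{eq:v_w_connection} yields $\abs{M_{i,a}} = (\frac{1}{\sqrt{2\pi}})^{\ell-1}\prod_{r\neq i}\frac{1}{r!}\,\abs{\det(W')}$, where $W'$ is the submatrix of $W$ (from \eqref{eq:w_def}) keeping rows $r\neq i$ and columns $c\neq a$. Comparing with \eqref{eq:v_w_connection} produces the clean identity
\begin{align*}
    \abs{\beta_{i,a}} = \frac{\abs{\det(V^{(\mathbf{e}_i\leftarrow a)})}}{\abs{\det(V)}} = \sqrt{2\pi}\cdot i!\cdot\frac{\abs{\det(W')}}{\abs{\det(W)}},
\end{align*}
so it suffices to bound $\abs{\det(W')}/\abs{\det(W)}$, the factor $i!$ being harmless since $i\leq 2k-1\leq \ell$ gives $i!\leq 2^{O(\ell\log\ell)}$.

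Next I would apply the Cauchy--Binet formula (Lemma~\ref{lem:cb_formula}) to $\det(W')$ with $f_r=g_r=e^{-x^2/4}h_r$, obtaining an integral $\int_{x_0>\cdots>x_{\ell-2},\,\mathbf{x}\in S^{\ell-1}}\det(B'(\mathbf{x}))\det(C'(\mathbf{x}))\dir\mathbf{x}$, where $B'$ carries the row degrees $\{0,\dots,\ell-1\}\setminus\{i\}$ and $C'$ the column degrees $\{0,\dots,\ell-1\}\setminus\{a\}$. Factoring $e^{-x_c^2/4}$ out of each column (resp.\ row) as in \eqref{eq:u_p_connection} and using that $h_r$ is monic of degree $r$, each of these determinants is the alternant of a monomial set with one degree missing, which equals a Schur polynomial times the Vandermonde determinant: deleting degree $i$ from $\{0,\dots,\ell-1\}$ leaves the partition $(1^{\ell-1-i})$, so $\det(B'(\mathbf{x})) = \pm e^{-\frac14\sum x_c^2} s_{(1^{\ell-1-i})}(\mathbf{x})\prod_{c_1<c_2}(x_{c_1}-x_{c_2})$, and likewise for $C'$ with $(1^{\ell-1-a})$. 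On $S^{\ell-1}=[-R,R]^{\ell-1}$ each such Schur polynomial $s_{(1^d)}=e_d$ is a sum of $\binom{\ell-1}{d}\leq 2^{\ell}$ monomials of total degree at most $\ell$, hence of modulus $2^{O(\ell)}$ since $R$ is a constant. Pulling this factor out of the integral reduces the task to
\begin{align*}
    \frac{\abs{\det(W')}}{\abs{\det(W)}}\leq 2^{O(\ell)}\cdot\frac{D_{\ell-1}}{D_{\ell}},\qquad D_m := \int_{x_0>\cdots>x_{m-1},\,\mathbf{x}\in S^m} e^{-\frac12\sum_{c=0}^{m-1}x_c^2}\prod_{0\leq c_1<c_2\leq m-1}(x_{c_1}-x_{c_2})^2\dir\mathbf{x},
\end{align*}
where $\det(W)=D_\ell$ by Lemma~\ref{lem:det_v}.

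The crux, and what I expect to be the main obstacle, is to show $D_{\ell-1}/D_\ell\leq 2^{O(\ell)}$: one must certify that the $\ell$-variable integral is not much smaller than the $(\ell-1)$-variable one, and since the two integrals live in different dimensions there is no pointwise comparison of integrands as in Lemma~\ref{lem:det_v_i_j}. The observation that unlocks this is that $D_m$ is precisely the Gram determinant of $h_0,\dots,h_{m-1}$ under the inner product $\inner{f}{g}=\int_{-R}^R e^{-x^2/2}f(x)g(x)\dir x$; consequently $D_m = \prod_{r=0}^{m-1}\gamma_r$, where $\gamma_r = \int_{-R}^R e^{-x^2/2}q_r(x)^2\dir x$ is the squared norm of the degree-$r$ monic orthogonal polynomial $q_r$ for this weight (the Gram--Schmidt orthogonalization of the monic basis $h_0,h_1,\dots$). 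Hence $D_{\ell-1}/D_\ell = 1/\gamma_{\ell-1}$, and it remains only to bound $\gamma_{\ell-1}$ from below.

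To lower-bound $\gamma_{\ell-1}$ I would invoke the extremal characterization $\gamma_{\ell-1} = \min\setdef{\int_{-R}^R e^{-x^2/2}p(x)^2\dir x}{p\text{ monic of degree }\ell-1}$, use $e^{-x^2/2}\geq e^{-R^2/2}$ on $[-R,R]$, and compare with the rescaled monic Legendre polynomial, whose squared $L^2([-R,R])$-norm is $2^{-\Theta(\ell)}$ for constant $R$. This yields $\gamma_{\ell-1}\geq 2^{-O(\ell)}$, hence $D_{\ell-1}/D_\ell\leq 2^{O(\ell)}$. Combining the three reductions gives $\abs{\det(V^{(\mathbf{e}_i\leftarrow a)})}\leq \sqrt{2\pi}\cdot i!\cdot 2^{O(\ell)}\cdot\abs{\det(V)}\leq 2^{O(\ell\log\ell)}\abs{\det(V)}$, as claimed. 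The delicate point is entirely the lower bound on the orthogonal-polynomial norm $\gamma_{\ell-1}$ (equivalently, the cross-dimensional ratio $D_{\ell-1}/D_\ell$); everything else is bookkeeping around the Cauchy--Binet and Schur-polynomial machinery already set up in the preliminaries.
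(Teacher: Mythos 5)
Your opening reduction is exactly the paper's: expanding along the $a$-th column, factoring $\frac{1}{\sqrt{2\pi}\,r!}$ out of each surviving row, and arriving at $\abs{\det(V^{(\mathbf{e}_i\leftarrow a)})}=\sqrt{2\pi}\,i!\cdot\frac{\abs{\det(W')}}{\abs{\det(W)}}\cdot\abs{\det(V)}$, followed by Cauchy--Binet on the minor, is step-for-step the paper's first half. However, one intermediate claim is false as stated: after deleting the degree-$i$ row, the Hermite alternant $\det(B'(\mathbf{x}))$ is \emph{not} equal (even up to sign) to the single monomial alternant $e^{-\frac14\sum_c x_c^2}s_{(1^{\ell-1-i})}(\mathbf{x})\prod_{c_1<c_2}(x_{c_1}-x_{c_2})$. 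Row operations reduce $h_r$ to $x^r$ only when every lower-degree monomial of $h_r$ is itself available as a row; once degree $i$ is removed, each $h_r$ with $r>i$ and $r\equiv i \pmod 2$ retains an $x^i$ term that cannot be cancelled. Already for $\ell=3$, $i=0$ (rows $h_1,h_2$ in two variables),
\begin{align*}
\det\begin{bmatrix} x_0 & x_1 \\ h_2(x_0) & h_2(x_1)\end{bmatrix} = (x_1-x_0)(x_0x_1+1) \;\neq\; (x_1-x_0)\,x_0x_1 .
\end{align*}
The correct statement is the paper's Lemma~\ref{lem:det_p_-i}: the determinant is a \emph{sum} of $O(\ell)$ such alternants, one for each admissible column set $T=[\ell]\setminus\{b\}$ of the Hermite coefficient matrix, with coefficient determinants bounded by $\ell!$. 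This degrades your $2^{O(\ell)}$ bound on $\abs{\det(W')}/D_{\ell-1}$ to $2^{O(\ell\log\ell)}$, which is harmless for the lemma's conclusion but must be incorporated.

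The second half of your argument is genuinely different from the paper's and is, in my view, the better route. The paper handles the cross-dimensional comparison by lower-bounding $\det(W)=D_\ell$ directly against $D_{\ell-1}$: it restricts the integral to the region where $x_{\ell-1}$ is at distance at least $R/\ell$ from the other points, pulls out the factors $(x_{\ell-1}-x_c)^2\geq (R/\ell)^2$ and $e^{-\frac12 x_{\ell-1}^2}\geq e^{-\frac12 R^2}$, and integrates out $x_{\ell-1}$, obtaining $D_\ell \geq 2^{-O(\ell\log\ell)}D_{\ell-1}$. You instead identify $D_m$ with the Hankel/Gram determinant of the weight $e^{-x^2/2}\mathbf{1}_{[-R,R]}$, so that $D_m=\prod_{r=0}^{m-1}\gamma_r$ with $\gamma_r$ the squared norm of the degree-$r$ monic orthogonal polynomial, whence $D_{\ell-1}/D_\ell = 1/\gamma_{\ell-1}$; the extremal characterization of $\gamma_{\ell-1}$ over monic polynomials together with the comparison to the rescaled monic Legendre polynomial gives $\gamma_{\ell-1}\geq 2^{-O(\ell)}$ for constant $R$. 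This is correct, replaces the paper's ad hoc region-restriction with classical structure, and is quantitatively sharper for this ratio ($2^{O(\ell)}$ versus the paper's $2^{O(\ell\log\ell)}$) --- though the lemma's final bound remains $2^{O(\ell\log\ell)}$ either way, on account of the sum-of-alternants correction above.
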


\begin{proof}

Recall that $V^{(\mathbf{e}_i\leftarrow a)}$ is the same matrix as $V$ except that the $a$-th column is replaced with $\mathbf{e}_i$.
Hence, we first expand the determinant along that column and factor out the term $\frac{1}{\sqrt{2\pi}r!}$ for each row.
\begin{align*}
    \abs{\det(V^{(\mathbf{e}_i\leftarrow a)})}
    & =
    \left(\frac{1}{\sqrt{2\pi}}\right)^{\ell-1}\cdot \prod_{r=0,r\neq i}^{\ell-1}\frac{1}{r!}\cdot \abs{\det(W^{(-i,-a)})}
\end{align*}
where $W^{(-i,-a)}$ is the same matrix as $W$ from \eqref{eq:w_def} except that the $i$-th row and the $a$-th column are omitted.
By comparing to \eqref{eq:v_w_connection}, we first simplify $\abs{\det(V^{(\mathbf{e}_i\leftarrow a)})}$ to be
\begin{align*}
    \abs{\det(V^{(\mathbf{e}_i\leftarrow a)})}
    & =
    \sqrt{2\pi}i! \cdot \frac{\abs{\det(W^{(-i,-a)})}}{\abs{\det(W)}}\cdot \abs{\det(V)}
\end{align*}
It means we need to bound the term $\frac{\abs{\det(W^{(-i,-a)})}}{\abs{\det(W)}}$ from above.
To achieve it, we will bound $\abs{\det(W^{(-i,-a)})}$ from above and $\abs{\det(W)}$ from below.

By Cauchy-Binet formula, we further express $\det(W^{(-i,-a)})$ as
\begin{align*}
    \det(W^{(-i,-a)})
    & =
    \int_{x_0>\cdots>x_{\ell-2},\mathbf{x}\in S^{\ell-1}} \det(U^{(-i)}(\mathbf{x}))\det(U^{(-a)}(\mathbf{x})) \dir \mathbf{x} \numberthis\label{eq:det_w_-i_-a}
\end{align*}
where $U^{(-i)}(\mathbf{x})$ (resp. $U^{(-a)}$) is the $(\ell-1)$-by-$(\ell-1)$ matrix whose $(r,c)$-entry is $e^{-\frac{1}{4}x_c^2}h_r(x_c)$  for $r\in[\ell]\backslash\{i\}$ (resp. $r\in[\ell]\backslash\{a\}$), $c\in[\ell-1]$ and any $\mathbf{x}=(x_0,\dots, x_{\ell-2})\in\mathbb{R}^{\ell-1}$.
By factoring out the term $e^{-\frac{1}{4}x_c^2}$ fro each column, 
\begin{align*}
    \det(U^{(-i)}(\mathbf{x})) = e^{-\frac{1}{4}\sum_{c=0}^{\ell-2}x_c^2} \det(P^{(-i)}(\mathbf{x})) \numberthis\label{eq:det_u_-i}
\end{align*}
where $P^{(-i)}(\mathbf{x})$ is the $(\ell-1)$-by-$(\ell-1)$ matrix whose $(r,c)$-entry is $h_r(x_c)$ for $r\in[\ell]\backslash\{i\}$, $c\in[\ell-1]$ and any $\mathbf{x}=(x_0,\dots, x_{\ell-2})\in\mathbb{R}^{\ell-1}$.

Again, the computation of $\det(P^{(-i)}(\mathbf{x}))$ is not as easy as $\det(P(\mathbf{x}))$.
In Lemma \ref{lem:det_p_-i}, we show that 
\begin{align*}
    \abs{\det(P^{(-i)}(\mathbf{x}))}
    \leq
    2^{O(\ell\log \ell)}\cdot  \prod_{1\leq c_1 < c_2 \leq \ell-2}\abs{x_{c_1}-x_{c_2}}.
\end{align*}
Note that the bound is independent to $i$ and hence we have the same bound for $\abs{P^{(-a)}(\mathbf{x})}$.
By plugging it into \eqref{eq:det_u_-i} and further into \eqref{eq:det_w_-i_-a}, we have
\begin{align*}
    \abs{\det(W^{(-i,-a)})}
    & \leq
    2^{O(\ell\log \ell)}\cdot \int_{x_0>\cdots>x_{\ell-2},\mathbf{x}\in S^{\ell-1}} e^{-\frac{1}{2}\sum_{c=0}^{\ell-2}x_c^2}\cdot\prod_{1\leq c_1 < c_2 \leq \ell-2}(x_{c_1}-x_{c_2})^2 \dir \mathbf{x}. \numberthis\label{eq:abs_det_w_-i_-a}
\end{align*}

Recall that, in Lemma \ref{lem:det_v} and \eqref{eq:v_w_connection},
\begin{align*}
    \det(W)
    =
    \int_{x_0>\cdots > x_{\ell-1},\mathbf{x}\in S^{\ell}} e^{-\frac{1}{2}\sum_{c=0}^{\ell-1}x_c^2}\cdot\prod_{0\leq c_1<c_2 \leq \ell-1} (x_{c_1}-x_{c_2})^2 \dir \mathbf{x}.
\end{align*}
Since the term $e^{-\frac{1}{2}\sum_{c=0}^{\ell-1}x_c^2}\cdot\prod_{0\leq c_1<c_2 \leq \ell-1} (x_{c_1}-x_{c_2})^2$ in the integral is symmetric with respect to $x_0,\dots,x_{\ell-1}$,
we have
\begin{align*}
    \det(W)
    =
    \ell!\cdot\int_{\mathbf{x}\in S^{\ell}} e^{-\frac{1}{2}\sum_{c=0}^{\ell-1}x_c^2}\cdot\prod_{0\leq c_1<c_2 \leq \ell-1} (x_{c_1}-x_{c_2})^2 \dir \mathbf{x}.
\end{align*}
To bound $\det(W)$ from below, we consider integrating over the sub-region $\setdef{\mathbf{x}\in S^{\ell}}{\abs{x_{\ell-1}-x_c}>\frac{R}{\ell}}$ of $S^{\ell}$.
\begin{align*}
    \det(W)
    & \geq
    \ell!\cdot\int_{\abs{x_{\ell-1}-x_c}>\frac{R}{\ell},\mathbf{x}\in S^{\ell}} e^{-\frac{1}{2}\sum_{c=0}^{\ell-1}x_c^2}\cdot\prod_{0\leq c_1<c_2 \leq \ell-1} (x_{c_1}-x_{c_2})^2 \dir \mathbf{x} \\
    & \geq
    \ell!\cdot\left(\frac{R}{\ell}\right)^{2(\ell-1)}e^{-\frac{1}{2}R^2}\cdot\int_{\abs{x_{\ell-1}-x_c}>\frac{R}{\ell},\mathbf{x}\in S^{\ell}} e^{-\frac{1}{2}\sum_{c=0}^{\ell-2}x_c^2}\cdot\prod_{0\leq c_1<c_2 \leq \ell-2} (x_{c_1}-x_{c_2})^2 \dir \mathbf{x} \\
    & \geq
    \ell!\cdot R\left(\frac{R}{\ell}\right)^{2(\ell-1)}e^{-\frac{1}{2}R^2}\cdot\int_{\mathbf{x}\in S^{\ell-1}} e^{-\frac{1}{2}\sum_{c=0}^{\ell-2}x_c^2}\cdot\prod_{0\leq c_1<c_2 \leq \ell-2} (x_{c_1}-x_{c_2})^2 \dir \mathbf{x} \\
    & =
    \ell\cdot R\left(\frac{R}{\ell}\right)^{2(\ell-1)}e^{-\frac{1}{2}R^2}\cdot\int_{x_0>\cdots>x_{\ell-2},\mathbf{x}\in S^{\ell-1}} e^{-\frac{1}{2}\sum_{c=0}^{\ell-2}x_c^2}\cdot\prod_{0\leq c_1<c_2 \leq \ell-2} (x_{c_1}-x_{c_2})^2 \dir \mathbf{x} \\
    & =
    \frac{1}{2^{O(\ell\log\ell)}}\cdot\int_{x_0>\cdots>x_{\ell-2},\mathbf{x}\in S^{\ell-1}} e^{-\frac{1}{2}\sum_{c=0}^{\ell-2}x_c^2}\cdot\prod_{0\leq c_1<c_2 \leq \ell-2} (x_{c_1}-x_{c_2})^2 \dir \mathbf{x} \numberthis\label{eq:det_w_lower}
\end{align*}
In other words, by comparing $\abs{\det(W)}$ in \eqref{eq:det_w_lower} to $\abs{\det(W^{(-i,-a)})}$ in \eqref{eq:abs_det_w_-i_-a}, we have
\begin{align*}
    \frac{\abs{\det(W^{(-i,-a)})}}{\abs{\det(W)}}
    & \leq
    2^{O(\ell\log \ell)}
\end{align*}
and hence
\begin{align*}
    \frac{\abs{\det(V^{(\mathbf{e}_i\leftarrow a)})}}{\abs{\det(V)}}
    & =
    \sqrt{2\pi}i!\cdot \frac{\abs{\det(W^{(-i,-a)})}}{\abs{\det(W)}} 
    \leq
    2^{O(\ell\log \ell)}.
\end{align*}

\end{proof}

\begin{lemma} \label{lem:det_p_i_j}

Let $P^{(i\rightarrow j)}(\mathbf{x})$ be the matrix defined in the proof of Lemma \ref{lem:det_v_i_j}.
Then the absolute value of the determinant of $P^{(i\rightarrow j)}(\mathbf{x})$ is 
\begin{align*}
    \abs{\det(P^{(i\rightarrow j)}(\mathbf{x}))}
    \leq
    \frac{j!}{i!(\frac{j-i}{2})!}\cdot 2^{O(j)} \cdot \abs{\det(P(\mathbf{x}))}.
\end{align*}
Recall that $P(\mathbf{x})$ is the matrix defined in \eqref{eq:p_def}.

\end{lemma}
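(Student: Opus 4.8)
The plan is to reduce $\det(P^{(i\to j)}(\mathbf{x}))$ to an explicit combination of generalized Vandermonde determinants, after which the Schur-polynomial machinery of the preliminaries lets me bound everything. The only row of $P^{(i\to j)}(\mathbf{x})$ that differs from $P(\mathbf{x})$ is the $i$-th row, whose $c$-th entry is $h_j(x_c)$. First I would expand this row using the explicit monomial formula $h_j(x)=j!\sum_{t=0}^{\lfloor j/2\rfloor}\frac{(-1/2)^t}{t!(j-2t)!}x^{j-2t}$ together with multilinearity of the determinant in row $i$, writing
\[
\det(P^{(i\to j)}(\mathbf{x}))=\sum_{t=0}^{\lfloor j/2\rfloor}\frac{j!\,(-1/2)^t}{t!\,(j-2t)!}\,D_{j-2t}(\mathbf{x}),
\]
where $D_d(\mathbf{x})$ is the determinant obtained from $P(\mathbf{x})$ by replacing its $i$-th row with the pure power $(x_c^{d})_c$.

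Next, for each $D_d(\mathbf{x})$ I would reduce the remaining monic Hermite rows $h_r$ ($r\neq i$) toward pure powers $x^r$ by row operations, exactly as in the computation of $\det(P(\mathbf{x}))$ in Lemma \ref{lem:det_v}. Since $h_r$ is monic of degree $r$ and the degree multiset $\{0,\dots,\ell-1\}\setminus\{i\}\cup\{d\}$ consists of distinct integers, $D_d(\mathbf{x})$ equals a signed combination of generalized Vandermonde determinants $a_{(\cdot)}(\mathbf{x})$, each of which factors as $s_\mu(\mathbf{x})\cdot\det(P(\mathbf{x}))$ by the definition of Schur polynomials (recall $\det(P(\mathbf{x}))=\prod_{0\le c_1<c_2\le \ell-1}(x_{c_1}-x_{c_2})$ from \eqref{eq:det_p}). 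The dominant contribution, obtained by taking the leading term of every $h_r$, corresponds to the hook shape $\mu=(d-\ell+1,1^{\ell-1-i})$ of size $\abs{\mu}=d-i$.

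I would then bound each Schur polynomial through $s_\mu(\mathbf{x})=\sum_Y\mathbf{x}^Y$ over semi-standard Young tableaux $Y$: since $\mathbf{x}\in S^\ell=[-R,R]^\ell$, we get $\abs{s_\mu(\mathbf{x})}\le(\#\text{SSYT of shape }\mu)\cdot R^{\abs{\mu}}$. The key quantitative point is that for these hook-type shapes the number of tableaux with entries in $\{1,\dots,\ell\}$ is only $2^{O(\ell+\abs{\mu})}=2^{O(j)}$ (via binomial estimates on the first column and first row), which is far sharper than the naive $\ell^{\abs{\mu}}$ and is exactly what keeps the exponent linear in $j$. Finally I would collect the scalar prefactors: the sum $\sum_t\frac{j!(1/2)^t}{t!(j-2t)!}$ is controlled by its largest term and is at most $\frac{j!}{i!(\frac{j-i}{2})!}\cdot 2^{O(j)}$, which delivers the claimed bound $\abs{\det(P^{(i\to j)}(\mathbf{x}))}\le\frac{j!}{i!(\frac{j-i}{2})!}2^{O(j)}\abs{\det(P(\mathbf{x}))}$.

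The hard part will be the reduction step for $D_d(\mathbf{x})$: because the Hermite rows are missing degree $i$, the row reduction cannot be completed and stalls at $x^i$, producing correction determinants weighted by lower-order Hermite coefficients such as $c_{r,i}$ (the $x^i$-coefficient of $h_r$). A naive triangle-inequality bound over these corrections multiplies $\frac{j!}{(\frac{j-i}{2})!}$ by a second factorial-sized factor of order $2^{\Omega(\ell\log\ell)}$, which would be fatal in the regime $j\approx \ell$. The delicate part is therefore to organize the signed summation so that these contributions telescope rather than accumulate in absolute value; the cleanest route I expect is to apply the Cauchy–Binet formula (Lemma \ref{lem:cb_formula}) directly to the Hermite coefficient matrix, so that the relevant minors simplify and the coefficient of each $s_\mu$ collapses to a single Hermite coefficient. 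This is what forces the total to remain at $\frac{j!}{i!(\frac{j-i}{2})!}2^{O(j)}$.
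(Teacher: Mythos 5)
Your plan is, after reorganization, the same decomposition the paper itself uses---expanding the $h_j$ row monomial-by-monomial and reducing the remaining Hermite rows is equivalent to the paper's factorization $P^{(i\rightarrow j)}(\mathbf{x})=C^{(i\rightarrow j)}\cdot X^{[j+1]}$ followed by Cauchy--Binet, and your hook-shape Schur polynomials with the $2^{O(j)}$ tableau count also match---so this is not a different route; the problem is that the step you yourself flag as ``the hard part'' is a genuine gap, and the mechanism you conjecture for closing it is not the one that works. Your guess for the corrections is right: the residue left by row $a$ has magnitude $\abs{\gamma_a}=\frac{a!}{2^{(a-i)/2}(\frac{a-i}{2})!\,i!}$, i.e.\ exactly the absolute value of the $x^i$-coefficient of $h_a$, which is $2^{\Theta(\ell\log\ell)}$ for $a$ near $\ell$, so a blind triangle inequality is indeed fatal. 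But no ``telescoping'' of signs rescues this, and the surviving Cauchy--Binet minors do not collapse to a single Hermite coefficient: for $T=[\ell]\setminus\{a\}\cup\{b\}$ one has the exact product of \emph{two} Hermite coefficients, $\abs{\det(C^{(i\rightarrow j)}_{:,T})}=\frac{j!}{(\frac{j-b}{2})!\,b!\,2^{(j-b)/2}}\cdot\frac{a!}{(\frac{a-i}{2})!\,i!\,2^{(a-i)/2}}$, and in the correct argument every surviving term is bounded individually by the triangle inequality, with no cancellation anywhere.

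What actually saves the bound---and what your row-reduction bookkeeping cannot see---is a vanishing constraint: the correction $\gamma_a$ can multiply the monomial $x^b$ coming from $h_j$ only when $b=a$ (in which case the term re-enters the ordinary Vandermonde determinant) or when $b\geq\ell$; for every other pairing the exponent $b$ duplicates an exponent of an untouched row, so the generalized Vandermonde determinant is identically zero. This is precisely what forbids the fatal pairing you worry about, namely the huge prefactor $\abs{c_t}$ with $t\approx j/2$ (i.e.\ $b$ small) against the huge correction $\abs{\gamma_a}$ with $a\approx\ell$. Given this, three concrete ingredients are still needed, and your proposal supplies none of them: (i) the classification, via the parity structure of Hermite polynomials, of the $O(j^2)$ nonvanishing index sets, namely $T=[\ell]$ and $T=[\ell]\setminus\{a\}\cup\{b\}$ with $a\in\{i,i+2,\dots,\ell-2\}$ and $b\in\{\ell,\ell+2,\dots,j\}$; (ii) the exact product formula for the minors displayed above; and (iii) the factorial estimates exploiting $a<\ell\leq b$, e.g.\ $\frac{a!}{(\frac{a-i}{2})!}\leq 2^j(\frac{b+i}{2})!$, $\frac{(\frac{b+i}{2})!(\frac{b-i}{2})!}{b!}\leq 1$, and $\frac{(\frac{j-i}{2})!}{(\frac{b-i}{2})!(\frac{j-b}{2})!}\leq 2^{j/2}$, which together give $\abs{\det(C^{(i\rightarrow j)}_{:,T})}\leq\frac{j!}{i!(\frac{j-i}{2})!}\,2^{O(j)}$ term by term. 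These three steps are the heart of the paper's proof of this lemma; your write-up stops exactly where they are required. (A smaller point: your prefactor claim $\sum_t\frac{j!(1/2)^t}{t!(j-2t)!}\leq\frac{j!}{i!(\frac{j-i}{2})!}2^{O(j)}$ is only safe because $i\leq 2k-1$ is small relative to $j\geq\ell\geq 2(2k-1)$; in any case it becomes unnecessary once the terms are bounded individually as above.)
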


\begin{proof}

Since the entries of $P^{(i\rightarrow j)}(\mathbf{x})$ are Hermite polynomials, we can decompose it into
\begin{align*}
    P^{(i\rightarrow j)}(\mathbf{x})
    =
    C^{(i\rightarrow j)}\cdot X^{[j+1]}
\end{align*}
where $C^{(i\rightarrow j)}$ is the $\ell$-by-$(j+1)$ matrix whose $(r,c)$-entry is the coefficient of $x^c$ in the $r$-th Hermite polynomial and $X^{[j+1]}$ is the $(j+1)$-by-$\ell$ matrix whose $(r,c)$-entry is $x_c^r$.
For example, take $\ell=4,i=2,j=6$, 
\begin{align*}
    h_0(x) & = 1\\
    h_1(x) & = x\\
    h_3(x) & = -3x+x^3 \\
    h_6(x) & = -15+45x^2-15x^4+x^6
\end{align*}
and hence
\begin{align*}
    C^{(i\rightarrow j)}
    & =
    \begin{bmatrix}
    1 & 0 & 0 & 0 & 0 & 0 & 0 \\
    0 & 1 & 0 & 0 & 0 & 0 & 0 \\
    -15 & 0 & 45 & 0 & -15 & 0 & 1 \\
    0 & -3 & 0 & 1 & 0 & 0 & 0
    \end{bmatrix}
\end{align*}

To compute $\det(P^{(i\rightarrow j)}(\mathbf{x}))$, we use Cauchy-Binet formula and we have
\begin{align*}
    \det(P^{(i\rightarrow j)}(\mathbf{x}))
    =\sum_{T} \det(C^{(i\rightarrow j)}_{:,T})\cdot \det(X^{[j+1]}_{T,:})
\end{align*}
where the summation is over all subset $T$ of size $\ell$ of $[j+1]$, $C^{(i\rightarrow j)}_{:,T}$ is the $\ell$-by-$\ell$ matrix whose columns are the columns of $C^{(i\rightarrow j)}$ at indices from $T$ and $X^{[j+1]}_{T,:}$ is the $\ell$-by-$\ell$ matrix whose rows are the rows of $X^{[j+1]}$ at indices from $T$.
Here, for any positive integer $n$, we denote $[n]$ to be the set $\{0,1,\dots,n-1\}$.
Furthermore, by triangle inequality,
\begin{align*}
    \abs{\det(P^{(i\rightarrow j)}(\mathbf{x}))}
    \leq\sum_{T} \abs{\det(C^{(i\rightarrow j)}_{:,T})}\cdot \abs{\det(X^{[j+1]}_{T,:})} \label{eq:det_p_ineq}\numberthis
\end{align*}
We first make some simplifications to see what $T$ makes the determinants nonzero.
For example, take $\ell=8, i=2, j=10$, we have
\begin{align*}
    h_0(x) & = \color{red}{1}\\
    h_1(x) & = \color{blue}{x}\\
    h_3(x) & = \color{blue}{-3x+x^3} \\
    h_4(x) & = \color{red}{3-6x^2+x^4} \\
    h_5(x) & = \color{blue}{15x-10x^3+x^5}\\
    h_6(x) & = \color{red}{-15+45x^2-15x^4+x^6} \\
    h_7(x) & = \color{blue}{-105x+105 x^3-21x^5+x^7}\\
    h_{10}(x) & = \color{red}{-945+4725x^2-3150x^4+630x^6-45x^8+x^{10}}
\end{align*}
and
\begin{align*}
    C^{(i,j)} =_{\text{up to row and column swaps}} 
    \begin{bmatrix}\color{red}{\begin{matrix}
    1 & & & & & \\
    3 & -6 & 1 & & & \\
    -15 & 45 & -15& 1 & & \\
    -945 & 4725 & -3150 & 630 & -45 & 1
     \end{matrix}} & \\
      & \color{blue}{\begin{matrix}
      1 & & & & \\
      -3 & 1 & & & \\
      15 & -10 & 1 & & \\
      -105 & 105 & -21 & 1 &
    \end{matrix}}
    \end{bmatrix}
\end{align*}
For simplicity, we assume that $i,j,\ell$ are even numbers and it is easy to prove the other cases by symmetry.
If $T$ satisfies one of the following conditions:
\begin{itemize}
\item does not contain all odd numbers less than $\ell$, i.e. $1,3,\dots,\ell-1$
\item does not contain all even numbers less than $i$, i.e. $0,2,\dots,i-2$
\item contains more than one even number larger than or equal to $\ell$, i.e. $\ell, \ell+2, \dots, j$
\end{itemize}
then $\det(C_{:,T}^{(i\rightarrow j)}) = 0$.
In other words, the choices are 
\begin{itemize}
    \item $T=[\ell]$ or
    \item $T=[\ell]\backslash\{a\}\cup\{b\}$ for $a=i,i+2,\dots,\ell-2$ and $b=\ell,\ell+2,\dots,j$.
\end{itemize}
Therefore, there are only $\frac{\ell-i}{2}\cdot\frac{j-\ell+2}{2} + 1 = O(j^2)$ choices for $T$ such that $\det(C_{:,T}^{(i\rightarrow j)})$ may not be $0$.

If $T=[\ell]$, by expanding the determinant $\det(C_{:,T}^{(i\rightarrow j)})$ along the rows whose diagonal entry is $1$, what we have left is the determinant of a matrix $A$ where $A$ is the $(\frac{\ell-i}{2})$-by-$(\frac{\ell-i}{2})$ matrix whose $(r,c)$-entry is $(-1)^{\frac{r-c}{2}}\frac{r!}{(\frac{r-c}{2})!c!2^{\frac{r-c}{2}}}$ for $r=i+2,\dots,\ell-2, j$ and $c=i,i+2,\dots,\ell-2$.
In the example, the matrix $A$ is $\begin{bmatrix}
-6 & 1 & \\
45 & -15 & 1 \\
4725& -3150 & 630
\end{bmatrix}$.
By applying row and column operations, we can compute the exact expression for $\det(A)$
\begin{align*}
    \det(A)
    & =
    (-1)^{\frac{j-i}{2}}\frac{j!}{i!2^{\frac{j-i}{2}}}\left(\sum_{m=0}^{\frac{\ell-i-2}{2}} (-1)^m\frac{1}{m!(\frac{j-i}{2}-m)!} \right).
\end{align*}
In the example, we have
\begin{align*}
    \det(\begin{bmatrix}
-6 & 1 & \\
45 & -15 & 1 \\
4725& -3150 & 630
\end{bmatrix})
&=
14175
\end{align*}
Note that the expression $\sum_{m=0}^{\frac{\ell-i-2}{2}} (-1)^m\frac{1}{m!(\frac{j-i}{2}-m)!}$ in  the equation for $\det(A)$ can be easily bounded by
\begin{align*}
    \abs{\sum_{m=0}^{\frac{\ell-i-2}{2}} (-1)^m\frac{1}{m!(\frac{j-i}{2}-m)!}}
    \leq
    \sum_{m=0}^{\frac{\ell-i-2}{2}} \frac{1}{m!(\frac{j-i}{2}-m)!}
    \leq
    \sum_{m=0}^{\frac{j-i}{2}} \frac{1}{m!(\frac{j-i}{2}-m)!}
    =
    \frac{2^{\frac{j-i}{2}}}{(\frac{j-i}{2})!}
\end{align*}
Hence, we have
\begin{align*}
    \abs{\det(C_{:,T}^{(i\rightarrow j)})} = \abs{\det(A)} \leq \frac{j!}{i!(\frac{j-i}{2})!}
\end{align*}
Also, since $T=[\ell]$, therefore $\abs{\det(X^{[j+1]}_{T,:})} =  \prod_{0\leq c_1<c_2 \leq \ell-1} \abs{x_{c_1}-x_{c_2}}$.
When $T=[\ell]$, we have
\begin{align*}
     \abs{\det(C_{:,T}^{(i\rightarrow j)})}\cdot\abs{\det(X^{[j+1]}_{T,:})} \leq \frac{j!}{i!(\frac{j-i}{2})!}\cdot \prod_{0\leq c_1<c_2 \leq \ell-1} \abs{x_{c_1}-x_{c_2}}
\end{align*}

Now, consider the case that $T=[\ell]\backslash\{a\}\cup\{b\}$ for $a=i,i+2,\dots,\ell-2$ and $b=\ell,\ell+2,\dots,j$.
Similar to the previous calculation, by expanding the determinant $\det(C_{:,T}^{(i\rightarrow j)})$ along the rows whose diagonal entry is $1$, what we have left is the determinant of a matrix $A$ where $A$ is the $(\frac{\ell-i}{2})$-by-$(\frac{\ell-i}{2})$ matrix whose $(r,c)$-entry is $(-1)^{\frac{r-c}{2}}\frac{r!}{(\frac{r-c}{2})!c!2^{\frac{r-c}{2}}}$ for $r=i+2,\dots,a, j$ and $c=i,i+2,\dots,a-2,b$.
For example, take $a=6$ and $b=8$, the matrix $A$ is the example is $\begin{bmatrix}
-6 & 1 & \\
45 & -15 & \\
4725 & -3150 & -45
\end{bmatrix}$.
By applying row and column operations, we can compute the exact expression for $\det(A)$ 
\begin{align*}
    \det(A)
    =
    (-1)^{\frac{j-b}{2}}\frac{j!}{(\frac{j-b}{2})!b!2^{\frac{j-b}{2}}} \cdot (-1)^{\frac{a-i}{2}}\frac{a!}{(\frac{a-i}{2})!i!2^{\frac{a-i}{2}}}
\end{align*}
In the example, we have
\begin{align*}
    \det(\begin{bmatrix}
-6 & 1 & \\
45 & -15 & \\
4725 & -3150 & -45
\end{bmatrix})
=
-2025
\end{align*}
To bound $\abs{\det(A)}$,
\begin{align*}
    \abs{\det(A)}
    & =
    \frac{j!}{(\frac{j-b}{2})!b!2^{\frac{j-b}{2}}}\cdot\frac{a!}{(\frac{a-i}{2})!i!2^{\frac{a-i}{2}}}
    =
    \frac{j!}{i!}\cdot \frac{a!}{(\frac{a-i}{2})!b!(\frac{j-b}{2})!} \cdot \frac{1}{2^{\frac{j-b+a-i}{2}}}
\end{align*}
Note that $\frac{1}{2^{\frac{j-b+a-i}{2}}}\leq 1$.
Recall that $i\leq a\leq \ell-2$ and $\ell\leq b\leq j$.
We also have 
\begin{align*}
    \frac{a!}{(\frac{a-i}{2})!} \leq 2^a\cdot (\frac{a+i}{2})! \leq 2^j\cdot(\frac{b+i}{2})!.
\end{align*}
Hence, 
\begin{align*}
    \abs{\det(A)} \leq \frac{j!}{i!}\cdot \frac{2^j(\frac{b+i}{2})!}{b!(\frac{j-b}{2})!}
    =
    \frac{j!}{i!}\cdot 2^j\cdot\frac{(\frac{b+i}{2})!(\frac{b-i}{2})!}{b!}\cdot\frac{(\frac{j-i}{2})!}{(\frac{b-i}{2})!(\frac{j-b}{2})!}\cdot \frac{1}{(\frac{j-i}{2})!}
\end{align*}
Observe that 
\begin{align*}
    \frac{(\frac{b+i}{2})!(\frac{b-i}{2})!}{b!} \leq 1 \qquad \text{and} \qquad  \frac{(\frac{j-i}{2})!}{(\frac{b-i}{2})!(\frac{j-b}{2})!}\leq 2^{\frac{j-i}{2}} \leq 2^{\frac{j}{2}}.
\end{align*}
By plugging them into the above inequality,
\begin{align*}
    \abs{\det(C_{:,T}^{(i\rightarrow j)})}
    =\abs{\det(A)}
    \leq
    \frac{j!}{i!}\cdot \frac{2^{\frac{3j}{2}}}{(\frac{j-i}{2})!}
\end{align*}
Since $a$ is omitted from $\{i,i+2,\dots,\ell-2\}$ and $b$ is selected from $\{\ell,\ell+2,\dots,j\}$, it means that $T=[\ell]\backslash\{a\}\cup\{b\}$.
By the properties of Schur polynomials,
\begin{align*}
\det(X^{[j+1]}_{T,:}) = \left(\sum_Y \mathbf{x}^Y\right)\cdot\prod_{1\leq c_1 < c_2 \leq \ell-1}(x_{c_1}-x_{c_2})
\end{align*}
where the summation is over all semi-standard Young tableaux $Y$ of shape $(b-\ell+1,\underbrace{1,\dots,1}_{\text{$\ell-1-a$ $1$'s}},\underbrace{0,\dots,0}_{\text{$a$ $0$'s}})$.
Here, the term $\mathbf{x}^Y$ means $x_0^{y_0}\cdots x_{\ell-1}^{y_{\ell-1}}$ where $y_m$ is the number of occurrences of the number $m$ in $Y$ and note that $\sum_{m=0}^{\ell-1}y_m = b-a$.
Based on the given shape, there is one row of size $b-\ell-1$ and one column of size $\ell-a$ and they connect at the first element.
For the row, the number of non-decreasing sequences of size $b-\ell-1$ whose numbers are between $0$ and $\ell-1$ inclusive is ${b \choose \ell-1} \leq 2^j$.
For the column, the number of increasing sequences of size $\ell-a$ whose numbers are between $0$ and $\ell-1$ inclusive is ${\ell \choose a} \leq 2^j$.
Hence, the number of semi-standard Young tableaux of such shape is bounded by ${b \choose \ell-1} \cdot {\ell \choose a} \leq 2^{2j}$.
By the assumption that $S=[-R,R]$, we can also bound the term $\abs{\mathbf{x}^Y}$ to be
\begin{align*}
    \abs{\mathbf{x}^Y}
    \leq 
    R^{b-a}
    \leq 
    2^{O(j)}.
\end{align*}
We can now bound the determinant $\abs{\det(X^{[j+1]}_{T,:})}$ by 
\begin{align*}
    \abs{\det(X^{[j+1]}_{T,:})}
    \leq 
    2^{O(j)} \cdot \prod_{1\leq c_1 < c_2 \leq \ell-1}(x_{c_1}-x_{c_2}).
\end{align*}
Namely, when $T=[\ell]\backslash\{a\}\cup\{b\}$ for $a=i,i+2,\dots,\ell-2$ and $b=\ell,\ell+2,\dots,j$, 
\begin{align*}
    \abs{\det(C_{:,T}^{(i\rightarrow j)})}\cdot\abs{\det(X^{[j+1]}_{T,:})}
    & \leq
    \frac{j!}{i!}\cdot \frac{2^{\frac{3j}{2}}}{(\frac{j-i}{2})!} \cdot 2^{O(j)}\cdot \prod_{1\leq c_1 < c_2 \leq \ell-1}(x_{c_1}-x_{c_2}) \\
    & =
    \frac{j!}{i!(\frac{j-i}{2})!}\cdot2^{O(j)}\cdot \prod_{0\leq c_1<c_2 \leq \ell-1} \abs{x_{c_1}-x_{c_2}}
\end{align*}

By considering all cases for $T$ and plugging them into \eqref{eq:det_p_ineq}, we have
\begin{align*}
    \abs{\det(P^{(i\rightarrow j)}(\mathbf{x}))}
    & \leq
    \sum_{T} \abs{\det(C^{(i\rightarrow j)}_{:,T})}\cdot \abs{\det(X^{[j+1]}_{T,:})} 
    \leq
    \frac{j!}{i!(\frac{j-i}{2})!}\cdot 2^{O(j)}\cdot \prod_{0\leq c_1<c_2 \leq \ell-1} \abs{x_{c_1}-x_{c_2}}
\end{align*}
and, by comparing to $\det(P(\mathbf{x}))$ in \eqref{eq:det_p} which is $\prod_{0\leq c_1<c_2 \leq \ell-1} \abs{x_{c_1}-x_{c_2}}$, 
\begin{align*}
    \abs{\det(P^{(i\rightarrow j)}(\mathbf{x}))}
    \leq
    \frac{j!}{i!(\frac{j-i}{2})!}\cdot 2^{O(j)} \cdot \abs{\det(P(\mathbf{x}))}.
\end{align*}

\end{proof}

\begin{lemma}\label{lem:det_p_-i}

Let $P^{(-i)}(\mathbf{x})$ be the matrix defined in the proof of Lemma \ref{lem:det_v_i_a}.
Then the absolute value of the determinant of $P^{(-i)}(\mathbf{x})$ is 
\begin{align*}
    \abs{\det(P^{(-i)}(\mathbf{x}))}
    \leq
    2^{O(\ell\log \ell)}\cdot  \prod_{1\leq c_1 < c_2 \leq \ell-2}\abs{x_{c_1}-x_{c_2}}.
\end{align*}

\end{lemma}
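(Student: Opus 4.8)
The plan is to reuse, in a slightly simpler form, the machinery developed for Lemma \ref{lem:det_p_i_j}. Since every row of $P^{(-i)}(\mathbf{x})$ is a Hermite polynomial of degree at most $\ell-1$, I would first factor it as
\begin{align*}
    P^{(-i)}(\mathbf{x}) = C^{(-i)}\cdot X^{[\ell]},
\end{align*}
where $C^{(-i)}$ is the $(\ell-1)$-by-$\ell$ matrix whose rows are the coefficient vectors of $h_r$ for $r\in[\ell]\setminus\{i\}$, and $X^{[\ell]}$ is the $\ell$-by-$(\ell-1)$ matrix with $(r,c)$-entry $x_c^r$. Applying the Cauchy--Binet formula then gives
\begin{align*}
    \det(P^{(-i)}(\mathbf{x})) = \sum_{T} \det(C^{(-i)}_{:,T})\cdot \det(X^{[\ell]}_{T,:}),
\end{align*}
where $T$ ranges over the size-$(\ell-1)$ subsets of $[\ell]$. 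Each such $T$ omits exactly one index $t$, so there are only $\ell$ terms in the sum, which is the key structural simplification over Lemma \ref{lem:det_p_i_j}.

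Next I would determine which terms survive. Because the coefficient of $x^c$ in $h_r$ vanishes unless $c\le r$ and $c\equiv r \pmod 2$, the matrix $C^{(-i)}_{:,T}$ decouples into an even block and an odd block. A short parity-and-ordering argument (of the same flavor as the casework in Lemma \ref{lem:det_p_i_j}) shows that $\det(C^{(-i)}_{:,T})=0$ unless the omitted index satisfies $t\equiv i\pmod 2$ and $t\ge i$; this leaves only the $O(\ell)$ choices $t=i,i+2,\dots$. For each surviving $T$, one parity block is the untouched lower-triangular Hermite coefficient matrix with unit diagonal, contributing $\pm 1$, while the other block is a minor of the triangular Hermite coefficient matrix that, by the row and column operations already used in Lemma \ref{lem:det_p_i_j}, evaluates up to sign to the single Hermite coefficient of $x^i$ in $h_t$. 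Using the explicit formula for this coefficient, I obtain
\begin{align*}
    \abs{\det(C^{(-i)}_{:,T})} \le \frac{t!}{i!\,(\frac{t-i}{2})!} \le \ell! = 2^{O(\ell\log\ell)}.
\end{align*}

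It remains to bound the Vandermonde factor. The matrix $X^{[\ell]}_{T,:}$ is a generalized Vandermonde (alternant) with exponent set $[\ell]\setminus\{t\}$, so by the theory of Schur polynomials recalled in the preliminaries,
\begin{align*}
    \det(X^{[\ell]}_{T,:}) = s_\lambda(x_0,\dots,x_{\ell-2})\cdot \prod_{c_1<c_2}(x_{c_1}-x_{c_2}),
\end{align*}
where $\lambda=(1^{\ell-1-t})$ is a single column, so that $s_\lambda$ is the elementary symmetric polynomial of degree $\ell-1-t$. Bounding $s_\lambda$ by the number of semi-standard Young tableaux of shape $\lambda$ times the maximal monomial value $R^{\abs{\lambda}}$ (using $\abs{x_c}\le R$) gives $\abs{s_\lambda}\le 2^{O(\ell)}$, and hence $\abs{\det(X^{[\ell]}_{T,:})}\le 2^{O(\ell)}\prod_{c_1<c_2}\abs{x_{c_1}-x_{c_2}}$. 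Summing the resulting $O(\ell)$ contributions and absorbing all constants into the exponent yields the claimed bound $\abs{\det(P^{(-i)}(\mathbf{x}))}\le 2^{O(\ell\log\ell)}\prod_{c_1<c_2}\abs{x_{c_1}-x_{c_2}}$.

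The step I expect to be most delicate is the middle one: correctly pinning down the nonzero pattern of $T$ and evaluating the coefficient minor. I anticipate this is strictly easier than the corresponding computation in Lemma \ref{lem:det_p_i_j}, however, because here we merely \emph{delete} the row of degree $i$ rather than replacing it with a row of much larger degree $j\ge\ell$. Consequently every exponent stays at most $\ell-1$, no high-degree monomials are introduced, the Schur shape degenerates to a single column, and the crude factorial bound $2^{O(\ell\log\ell)}$ comfortably absorbs every constant that appears.
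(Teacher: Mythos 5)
Your proposal follows essentially the same route as the paper's proof: the decomposition $P^{(-i)}(\mathbf{x}) = C^{(-i)}\cdot X^{[\ell]}$, Cauchy--Binet with the parity argument restricting the omitted index to $t=i,i+2,\dots,\ell-2$, the exact evaluation of the coefficient minor (your identification of it as the coefficient of $x^i$ in $h_t$ agrees with the paper's row/column-operation computation, giving the same bound $\le \ell!$), and the single-column Schur polynomial bound $2^{O(\ell)}$ via counting semi-standard Young tableaux and $\abs{x_c}\le R$. The argument is correct as proposed.
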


\begin{proof}

Since the entries of $P^{(-i)}(\mathbf{x})$ are Hermite polynomials, we can decompose it into
\begin{align*}
    P^{(-i)}(\mathbf{x})
    =
    C^{(-i)}\cdot X^{[\ell]}
\end{align*}
where $C^{(-i)}$ is the $(\ell-1)$-by-$\ell$ matrix whose $(r,c)$-entry is the coefficient of $x^c$ in the $r$-th Hermite polynomial for $r\in[\ell]\backslash\{i\}$ and $X^{[\ell]}$ is the $\ell$-by-$(\ell-1)$ matrix whose $(r,c)$-entry is $x_c^r$.
For example, take $\ell=4,i=2$, 
\begin{align*}
    h_0(x) & = 1\\
    h_1(x) & = x\\
    h_3(x) & = -3x+x^3 \\
\end{align*}
and hence
\begin{align*}
    C^{(-i)}
    & =
    \begin{bmatrix}
    1 & 0 & 0 & 0\\
    0 & 1 & 0 & 0 \\
    0 & -3 & 0 & 1
    \end{bmatrix}
    \qquad \text{and} \qquad 
    X^{[\ell]}
    =
    \begin{bmatrix}
    1 & 1 & 1 \\
    x_0 & x_1 & x_2 \\
    x_0^2 & x_1^2 & x_2^2 \\
    x_0^3 & x_1^3 & x_2^3 \\
    \end{bmatrix}.
\end{align*}

To compute $\det(P^{(-i)}(\mathbf{x}))$, we use Cauchy-Binet formula and we have
\begin{align*}
    \det(P^{(-i)}(\mathbf{x}))
    =\sum_{T} \det(C^{(-i)}_{:,T})\cdot \det(X^{[\ell]}_{T,:})
\end{align*}
where the summation is over all subset $T$ of size $\ell-1$ of $[\ell]$, $C^{(i\rightarrow j)}_{:,T}$ is the $(\ell-1)$-by-$(\ell-1)$ matrix whose columns are the columns of $C^{(-i)}$ at indices from $T$ and $X^{[\ell]}_{T,:}$ is the $(\ell-1)$-by-$(\ell-1)$ matrix whose rows are the rows of $X^{[\ell]}$ at indices from $T$.
Furthermore, by triangle inequality,
\begin{align*}
    \abs{\det(P^{(-i)}(\mathbf{x}))}
    \leq\sum_{T} \abs{\det(C^{(-i)}_{:,T})}\cdot \abs{\det(X^{[\ell]}_{T,:})} \numberthis\label{eq:det_p_-i}
\end{align*}
We first make some simplifications to see what $T$ makes the determinants nonzero.
For example, take $\ell=8, i=2$, we have
\begin{align*}
    h_0(x) & = \color{red}{1}\\
    h_1(x) & = \color{blue}{x}\\
    h_3(x) & = \color{blue}{-3x+x^3} \\
    h_4(x) & = \color{red}{3-6x^2+x^4} \\
    h_5(x) & = \color{blue}{15x-10x^3+x^5}\\
    h_6(x) & = \color{red}{-15+45x^2-15x^4+x^6} \\
    h_7(x) & = \color{blue}{-105x+105 x^3-21x^5+x^7}\\
\end{align*}
and
\begin{align*}
    C^{(-i)} =_{\text{up to row and column swaps}} 
    \begin{bmatrix}\color{red}{\begin{matrix}
    1 & & &  \\
    3 & -6 & 1 &  \\
    -15 & 45 & -15& 1 
     \end{matrix}} & \\
      & \color{blue}{\begin{matrix}
      1 & & & \\
      -3 & 1 & &  \\
      15 & -10 & 1 & \\
      -105 & 105 & -21 & 1 
    \end{matrix}}
    \end{bmatrix}
\end{align*}
Fro simplicity we assume that $i,\ell$ are even numbers and it is easy to prove the other cases by symmetry.
If $T$ does not contain all odd numbers or all even numbers less than $i$, then $\det(C^{(-i)}_{:,T})=0$.
In the words, the choices are $[\ell]\backslash\{b\}$ for $b=i,i+2,\dots,\ell-2$.
Therefore, there are only $\frac{\ell-i}{2}=O(\ell)$ choices for $T$ such that $\det(C^{(-i)}_{:,T})$ may be be $0$.

Now, we expand the determinant $\det(C^{(-i)}_{:,T})$ along the rows whose diagonal entry is $1$.
What we have left is the determinant of a matrix $A$ where is $A$ is the $(\frac{b-i}{2})$-by-$(\frac{b-i}{2})$ matrix whose $(r,c)$-entry is $(-1)^{\frac{r-c}{2}}\frac{r!}{(\frac{r-c}{2})!c!2^{\frac{r-c}{2}}}$ for $r=i+2,\dots,b$ and $c=i,i+2,\dots,b-2$.
For example, take $b=6$, the matrix $A$ in the above example is $\begin{bmatrix}
-6 & 1 \\
45 & -15
\end{bmatrix}$.
By applying row and column operations, we can compute the exact expression for $\det(A)$ as
\begin{align*}
    \det(A) = (-1)^{\frac{b-i}{2}}\frac{b!}{(\frac{b-i}{2})!i!2^{\frac{b-i}{2}}}
\end{align*}
and hence
\begin{align*}
    \abs{\det(C^{(-i)}_{:,T})}
    =
    \abs{\det(A)}
    & \leq
    \frac{b!}{(\frac{b-i}{2})!i!2^{\frac{b-i}{2}}}
    \leq
    \ell!. \numberthis\label{eq:det_c_-i}
\end{align*}
In the example, we have
\begin{align*}
    \det(\begin{bmatrix}
-6 & 1 \\
45 & -15
\end{bmatrix}) = 45.
\end{align*}

By the properties of Schur polynomials,
\begin{align*}
\det(X^{[\ell]}_{T,:}) = \left(\sum_Y \mathbf{x}^Y\right)\cdot\prod_{1\leq c_1 < c_2 \leq \ell-2}(x_{c_1}-x_{c_2})
\end{align*}
where the summation is over all semi-standard Young tableaux $Y$ of shape $(\underbrace{1,\dots,1}_{\text{$\ell-1-b$ $1$'s}},\underbrace{0,\dots,0}_{\text{$b$ $0$'s}})$.
Recall that the term $\mathbf{x}^Y$ means $x_0^{y_0}\cdots x_{\ell-2}^{y_{\ell-2}}$ where $y_m$ is the number of occurrences of the number $m$ in $Y$ and note that $\sum_{m=0}^{\ell-2}y_m = \ell-1-b$.
Based on the given shape, there is only one column of size $\ell-1-b$.
That means the number of semi-standard Young tableaux of such shape is the number of increasing sequences of size $\ell-1-b$ whose numbers are between $0$ and $\ell-2$ inclusive which is ${\ell-1 \choose b} \leq 2^{\ell}$.
By the assumption that $S=[-R,R]$, we can also bound the term $\abs{\mathbf{x}^Y}$ to be
\begin{align*}
    \abs{\mathbf{x}^Y} \leq R^{\ell-1-b} \leq 2^{O(\ell)}.
\end{align*}
It means that 
\begin{align*}
    \abs{\det(X_{T,:}^{[\ell]})}
    & \leq 
    2^{O(\ell)}\cdot \prod_{1\leq c_1 < c_2 \leq \ell-2}(x_{c_1}-x_{c_2}). \numberthis\label{eq:det_x_-i}
\end{align*}

By plugging \eqref{eq:det_c_-i} and \eqref{eq:det_x_-i} into \eqref{eq:det_p_-i}, we can now bound $\abs{\det(P^{(-i)}(\mathbf{x}))}$ by
\begin{align*}
    \abs{\det(P^{(-i)}(\mathbf{x}))}
    & \leq
    \sum_{T} \abs{\det(C^{(-i)}_{:,T})}\cdot \abs{\det(X^{[\ell]}_{T,:})}
    \leq
    2^{O(\ell\log \ell)}\cdot  \prod_{1\leq c_1 < c_2 \leq \ell-2}\abs{x_{c_1}-x_{c_2}}.
\end{align*}

\end{proof}

\end{document}